\DeclareMathOperator*{\argmin}{arg\,min}
\newcites{latex}{Supplementary-Literature}
\begin{document}

\title{Power Variable Projection \\ for Initialization-Free Large-Scale \\ Bundle Adjustment} 

\definecolor{somegray}{gray}{0.5}
\newcommand{\darkgrayed}[1]{\textcolor{somegray}{#1}}
\begin{textblock}{11}(2.5, -0.1)  %
\begin{center}
\darkgrayed{This paper has been accepted for publication at the European Conference on Computer Vision (ECCV), 2024. \copyright Springer}
\end{center}
\end{textblock}

\titlerunning{Power Variable Projection}

\author{Simon Weber\textsuperscript{1,2} \hspace{2em}
Je Hyeong Hong\textsuperscript{3} \hspace{2em}
Daniel Cremers\textsuperscript{1,2} \\
}

\institute{Technical University of Munich \and
Munich Center for Machine Learning
\and
Department of Electronic Engineering, Hanyang University
}

\authorrunning{S.Weber et al.}

\maketitle

\begin{abstract}

Most Bundle Adjustment (BA) solvers like the Levenberg-Marquardt algorithm require a good initialization.  Instead, initialization-free BA remains a largely uncharted territory. The under-explored Variable Projection algorithm (VarPro) exhibits a wide convergence basin even without initialization. Coupled with object space error formulation, recent works have shown its ability to solve small-scale initialization-free bundle adjustment problem. To make such initialization-free BA approaches scalable, we introduce Power Variable Projection (PoVar), extending a recent inverse expansion method based on power series. Importantly, we link the power series expansion to Riemannian manifold optimization. This projective framework is crucial to solve large-scale bundle adjustment problems without initialization. Using the real-world BAL dataset, we experimentally demonstrate that our solver achieves state-of-the-art results in terms of speed and accuracy. To our knowledge, this work is the first to address the scalability of BA without initialization opening new venues for initialization-free structure-from-motion.

  \keywords{Bundle Adjustment \and Initialization-Free \and Schur Complement \and Riemannian Manifold Optimization}
\end{abstract}

\section{Introduction}
\label{sec:intro}

Bundle adjustment (BA) is the key component of many structure-from-motion and 3D reconstruction algorithms. With the recent emergence of large-scale internet photo collections~\cite{agarwal2010bundle} and new applications (mixed reality, autonomous driving, digital twins), the need to solve large-scale BA has become an important challenge. Traditional BA addresses the following question: \textit{Given image measurements and approximate landmark positions and camera parameters, can we derive the exact positions and parameters?} The gold standard is to use the Levenberg-Marquardt algorithm~\cite{wright2006numerical} coupled with the Schur complement trick and a scalable solver for the reduced camera system, which is often the preconditioned conjugate gradient algorithm. Recent work~\cite{weber2023power} achieves outstanding speed for large-scale BA by using a power series expansion of the inverse Schur complement.

Recently, a new line of works \cite{hong2016projective,iglesias2023expose,hong2018pose} has attempted to solve the BA problem \textit{without} careful initialization: \textit{Given only image measurements, how do we derive pose parameters and 3D landmark positions?} This challenge is largely uncharted, and its scalability a blind spot. In particular, most existing works aim to formalize the problem into a stratified BA formulation, and none of them try to design effective solvers. It is noteworthy that even the most recent works only use direct factorization which becomes impractical for large-scale problems with several hundreds of cameras. In contrast to the traditional BA problem, the deficiency of competitive solvers for initialization-free BA can be broadly explained by the difference of convergence behaviour between a well-initialized problem and an initialization-free problem.

Following up on the recent findings concerning inverse expansion methods, we address the scalability of initialization-free BA. Our new solver based on the Variable Projection algorithm overcomes the issues of convergence of the scalable preconditioned conjugate gradients algorithm, while being efficient for thousand of camera viewpoints.
In summary, we make the following contributions: 
\begin{itemize}
\setlength\itemsep{0.8em}
   \item[$\bullet$] We introduce Power Variable Projection (\textit{PoVar}) for efficient large-scale bundle adjustment \textit{without} good initialization of camera poses and 3D landmarks. To the best of our knowledge, we are the first to address the scalability of initialization-free bundle adjustment formulation.
    \item[$\bullet$] We provide theoretical proofs that justify the extension of recent \textit{inverse expansion} method to the variable projection algorithm. While sharing a close algorithmic structure, the proposed extension and the existing \textit{power-series-based} method largely differ in the theory, in the applications and in the convergence behaviour.
    \item[$\bullet$] We theoretically extend the power series expansion for bundle adjustment to Riemannian manifold optimization. We take advantage of the matrix-specific structure to propose an efficient storage and memory-efficient computation for such optimization.
    \item[$\bullet$] We perform extensive evaluation of the proposed approach on the real-world BAL dataset. We emphasize the benefits of \textit{PoVar} in terms of scalability, speed and accuracy. In contrast to state-of-the-art solvers, our work is the first that solves large-scale bundle adjustment without initialization.
    \item[$\bullet$] We release our solver as open source to facilitate further research: \url{https://github.com/tum-vision/povar}.
\end{itemize}

\begin{figure}[tb]
\begin{center}
\includegraphics[width=1\textwidth]{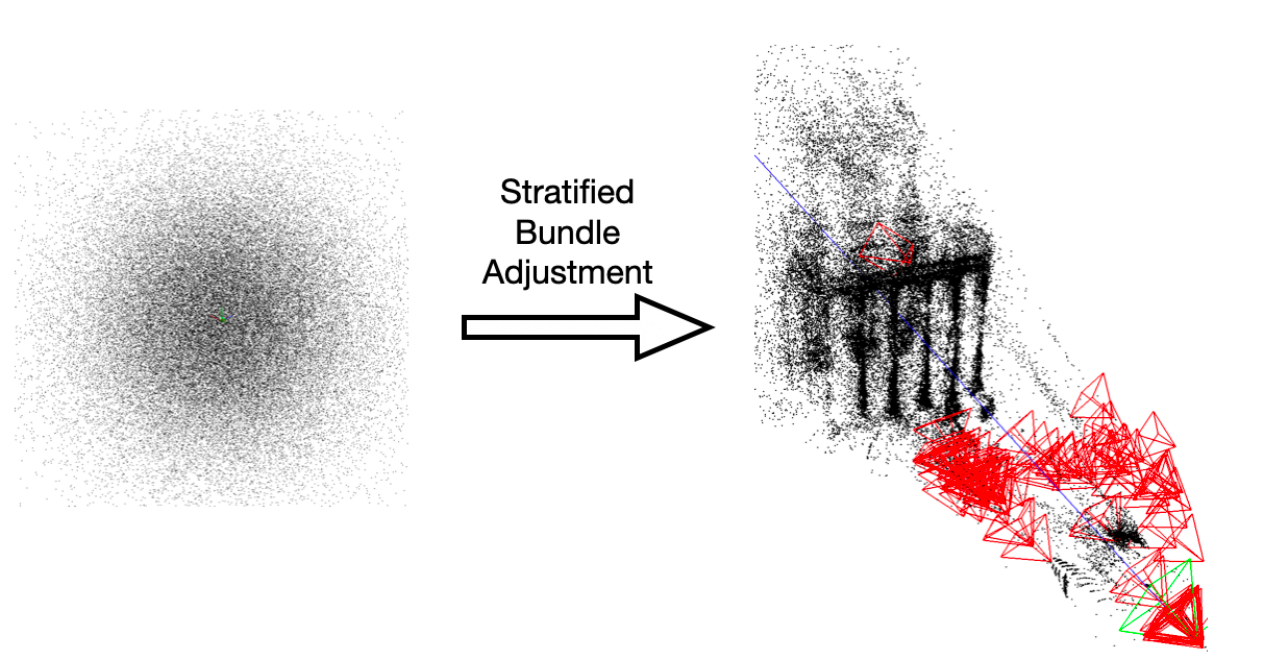}
\caption{In contrast to traditional bundle adjustment problem, initialization-free BA is a largely under-explored problem. It does not assume any approximation of pose and landmark parameters, making the problem much harder to solve. From a random initialization (\textit{left figure}), and given only image measurements, we aim to recover pose and landmark parameters. Our approach, that extends inverse expansion method, is motivated by the lack of scalability of existing solvers. On the real-world BAL problems (e.g. \textit{Venice-89}, \textit{right figure}), we demonstrate the efficiency of the proposed combination of our novel solver Power Variable Projection (PoVar) and Riemannian manifold optimization framework for expansion method to solve the stratified BA problem.}
\label{fig:intro}
\end{center}
\end{figure}

\section{Related Work}

As we address the scalability of the variable projection (VarPro) algorithm for initialization-free bundle adjustment (BA), we review works on VarPro and on BA from arbitrary initialization. We also provide some background on the inverse expansion methods. A more general description of BA can be found in~\cite{triggs2000bundle}.

\subsubsection*{Variable projection (VarPro) algorithm.}
VarPro is an optimization approach for solving bivariate problems that can be formulated as minimizing a cost function $f(u,v)$ with $f:\mathbb{R}^m\times\mathbb{R}^n \rightarrow \mathbb{R}$ over two sets of variables $u\in\mathbb{R}^m$ and $v\in\mathbb{R}^n$.
Unlike alternation, which fixes $u$ and optimizes over $v$ and vice versa, or joint optimization, which optimizes the stack of $u$ and $v$ simultaneously, variable projection replaces $v$ with $v^*(u):=\argmin_v f(u,v)$ ($v$ always optimal over $u$) such that the optimized cost function $f(u,v^*(u))=:f^*(u)$ becomes a function of $u$ only.
The original VarPro algorithm by Golub and Pereyra~\cite{golub1973differentiation} and its approximations by Ruhe and Wedin~\cite{ruhe1980separable} assume solving a separable nonlinear least squares (SNLS) problem where $v^*(u)$ can be obtained in closed form.
VarPro was consistently ignored in the computer vision community and even misidentified as a form of alternating optimization~\cite{buchanan2005dampednewton}.
It first receives proper attention when Okatani et al.~\cite{okatani2011dampedwiberg} demonstrates VarPro equipped with a trust-region approach such as Levenberg-Marquardt can yield a wide basin of convergence for several toy problems that can be formulated as a SNLS problem such as affine structure-from-motion and factorization-based non-rigid structure-from-motion.
Shortly after, Strelow~\cite{strelow2012l1wiberg,strelow2012l2wiberg} extends VarPro to the nonlinear case where $v^*(u)$ is not in closed form such as bundle adjustment.
Nevertheless, these works do not address the issue of increased algorithmic complexity.
Later, it has been shown by Hong et al.~\cite{hong2017revisiting} that VarPro can be efficiently implemented by performing inner iterations~\cite{ceres-solver} (also known as embedded point iterations~\cite{jeong2010embedded}) over the set of eliminated variables $u$ followed by performing a joint optimization step with no damping on $u$.
While this enables faster runtime compared to previous studies, it has only been tested up to small-medium sized problems with around 300 camera views.
To this date, no work to the best of our knowledge has improved the scalability of the VarPro algorithm beyond~\cite{hong2017revisiting}.

\subsubsection*{Initialization-free bundle adjustment.}
While traditional (large-scale) bundle adjustment is regularly studied (\cite{agarwal2010bundle,demmel2020distributed,weber2021multidirectional,demmel2021square,ren2022megba,zhou2020stochastic,belder2023game}), initialization-free BA is a recent research topic. In a seminal work, Hong et al. \cite{hong2016projective} propose to solve projective bundle adjustment from arbitrary initialization with the Variable Projection algorithm. Notably, they propose a stratified bundle adjustment formulation with increasing difficulty. In a follow-up work, pOSE \cite{hong2018pose} incorporates the equivalence between nonlinear VarPro and the Schur complement recently identified in~\cite{hong2017revisiting}. Additionally, they define an objective in-between affine and projective models that leads to a wide convergence basin. Iglesias et al.~\cite{iglesias2023expose} complements this \textit{pseudo object space error} formulation with an exponential regularization term, and show interesting results on very-small-scale problems. Nevertheless, to our knowledge, none of these studies look closely into the scalability of the proposed frameworks for initialization-free bundle adjustment.

\subsubsection*{Inverse expansion method.}
Despite its short recent appearance in the literature, inverse expansion method is a highly efficient and competitive approach for solving the linearized system of equations for bundle adjustment that is already challenging established factorization and iterative methods. It links the Schur complement \cite{zhang2006schur} to the power series expansion of its inverse. Using the \textit{power Schur complement} as a preconditioner for normal equations leads to good results for physics problems such as convection-diffusion \cite{zheng2021power}. PoBA \cite{weber2023power} proposes to directly apply the power Schur complement to the right-hand side of the reduced camera system. Their solver results in improved speed and accuracy for traditional bundle adjustment formulation with respect to the existing methods.

\section{Problem Statement and Motivation}
We consider a typical form of bundle adjustment. Given observations $m_{ij}$ for pose $i$ and landmark $j$, $K_{i}$, $R_{i}\in\,SO(3)$, $t_{i}\in\mathbb{R}^3$ the intrinsics, rotation and translation of pose $i$, and $x_{j}\in\mathbb{R}^3$,$\tilde{x}_{j}\in\,S^4$ the inhomogeneous and homogeneous landmark 3D positions (where $S^n$ denotes the set of all vectors on the unit $n$-sphere), we aim to solve:
\begin{equation}\label{eq:ba_traditional}
\min_{\{R_{i}\}\in SO(3),\{t_{i}\},\{\tilde{x}_j\}} \sum_{(i,j)\in \Omega}\lVert \pi (K_{i} [R_{i} | t_{i}]\tilde{x}_{j}) - m_{ij} \rVert^{2}_{2} \, ,
\end{equation}
where $\pi$ is the perspective projection $\pi([x,y,z]^{\top}) := [x/z, y/z]^{\top}$. 

Recently, Eq.~\ref{eq:ba_traditional} with good initialization has been solved efficiently with a novel inverse expansion method.
\subsection{Inverse expansion method}

Inverse expansion method \cite{zheng2021power,weber2023power} relates on the expansion of the inverse of a matrix into a power series, as stated in the following proposition:

\begin{proposition}\label{proposition}
Let $M$ be an $n \times n$ matrix. If the spectral radius of $M$ satisfies $\lVert M \rVert <1$, then 
\begin{equation} 
(I-M)^{-1} = \sum_{i = 0}^{m}M^{i} + R \, ,
\end{equation} 
where the error matrix 
\begin{equation} 
R = \sum_{i = m+1}^{\infty}M^{i} \, ,
\end{equation} 
satisfies 
\begin{equation}\label{bounded}
\lVert R \rVert \leq \frac{\lVert M \rVert ^{m+1}}{1 - \lVert M \rVert}  \, .
\end{equation}
\end{proposition}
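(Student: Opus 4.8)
The plan is to establish the result in three stages: first show that the Neumann series $\sum_{i=0}^{\infty} M^{i}$ converges, then identify its limit with $(I-M)^{-1}$, and finally bound the tail $R$ by a geometric series. Throughout I would read $\lVert \cdot \rVert$ as a submultiplicative matrix norm (for instance the operator norm induced by the Euclidean vector norm), so that the key inequality $\lVert M^{i} \rVert \leq \lVert M \rVert^{i}$ holds for every $i$; this submultiplicativity is what drives all three stages.

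First I would prove convergence. Since $\lVert M \rVert < 1$, the bound $\lVert M^{i} \rVert \leq \lVert M \rVert^{i}$ shows that $\sum_{i} \lVert M^{i} \rVert$ is dominated by the convergent geometric series $\sum_{i} \lVert M \rVert^{i}$. Because the space of $n \times n$ matrices equipped with $\lVert \cdot \rVert$ is complete, this absolute convergence implies that the partial sums $S_{k} := \sum_{i=0}^{k} M^{i}$ form a Cauchy sequence and hence converge to some matrix $S$.

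Next I would identify $S$ with $(I-M)^{-1}$. The telescoping identity $(I-M)S_{k} = I - M^{k+1}$ (and likewise $S_{k}(I-M) = I - M^{k+1}$) is immediate by expanding the product. Since $\lVert M^{k+1} \rVert \leq \lVert M \rVert^{k+1} \to 0$ as $k \to \infty$, passing to the limit gives $(I-M)S = I = S(I-M)$, so $I-M$ is invertible with inverse $S = \sum_{i=0}^{\infty} M^{i}$. Splitting this sum at index $m$ yields exactly $(I-M)^{-1} = \sum_{i=0}^{m} M^{i} + R$ with $R = \sum_{i=m+1}^{\infty} M^{i}$.

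Finally, the error bound follows from the triangle inequality together with submultiplicativity: $\lVert R \rVert \leq \sum_{i=m+1}^{\infty} \lVert M^{i} \rVert \leq \sum_{i=m+1}^{\infty} \lVert M \rVert^{i}$, and summing this geometric series from $i = m+1$ gives $\lVert M \rVert^{m+1}/(1 - \lVert M \rVert)$, as claimed. I expect the only genuinely delicate point to be the reading of the hypothesis: the inequality $\lVert M^{i} \rVert \leq \lVert M \rVert^{i}$ requires a submultiplicative norm rather than the spectral radius as such, so the cleanest route is to fix such a norm from the outset, with the condition $\lVert M \rVert < 1$ entering only to guarantee the geometric decay.
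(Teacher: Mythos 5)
Your proof is correct and complete. Note, however, that the paper itself offers no proof of this proposition: it is stated as a known result imported from the inverse-expansion literature (Zheng et al.\ and Weber et al.), so there is no in-paper argument to compare against. Your three-stage argument --- absolute convergence of the Neumann series via $\lVert M^{i}\rVert \leq \lVert M\rVert^{i}$ and completeness, identification of the limit through the telescoping identity $(I-M)S_{k} = I - M^{k+1}$, and the geometric tail bound $\lVert R\rVert \leq \sum_{i=m+1}^{\infty}\lVert M\rVert^{i} = \lVert M\rVert^{m+1}/(1-\lVert M\rVert)$ --- is precisely the standard proof that those references rely on, so you have supplied the canonical argument the paper leaves implicit. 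Your closing remark is also a genuine catch: the proposition's hypothesis conflates the spectral radius with a norm ($\lVert M \rVert$), and while the series converges under the weaker assumption that the spectral radius is below one, the stated quantitative bound on $\lVert R\rVert$ requires a submultiplicative matrix norm with $\lVert M\rVert < 1$, exactly as you fix at the outset. (In the paper's application this distinction is harmless, since the relevant matrix $U_{\lambda}^{-1}WV_{0}^{\dag}W^{\top}$ is similar to a symmetric positive-semidefinite matrix with eigenvalues in $[0,1)$, but your reading is the one under which the proposition as stated is true.)
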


Given a linear approximation of Eq.~\ref{eq:ba_traditional} followed by the Schur complement trick, Weber et al.~\cite{weber2023power} relates the inverse Schur complement of the Levenberg-Marquardt algorithm to its power series. They show significant improvement in terms of speed and accuracy to solve BA problem with good initialization. In addition to this empirical insights, some convergence behaviour concerning the approximated results for the BA problem are theoretically proved.

However, it is well-known in the literature (see e.g. \cite{hong2016projective}) that solving Eq.~\ref{eq:ba_traditional} from arbitrary initialization is non-feasible. Hong et al. \cite{hong2018pose} override this challenge by proposing a stratified bundle adjustment problem. Let us revisit the formulation of this approach.

\subsection{Initialization-free bundle adjustment}
The stratified BA problem is decomposed in two minimization problems, followed by a metric upgrade. Although we consider a pinhole camera model, the first stage assumes a projective model. 

\subsubsection{First stage: separable nonlinear optimization with projective camera.} 

Given $n_p$ poses and $n_l$ landmarks, $x = (x_p, x_l)$ contains all the optimization variables. We model the camera as a projective one. For pose $i$, we consider the camera parameters $x_{p}^{i} \in \mathbb{R}^{3 \times 4}$ and solve the following generic nonlinear separable problem:
\begin{equation}\label{eq:ba_first_stage}
\min_{x_{p},\tilde{x}_{l}} F(x_{p},\tilde{x}_{l}) = \big\lVert r(x_{p},\tilde{x}_{l}) \big\rVert_{2}^{2} =  \big\lVert G(x_{p})\tilde{x}_{l} - z(x_{p}) \big\rVert_{2}^{2} \, ,
\end{equation}
with $G(.)$ and $z(.)$ some linear operators, and the last coefficient of $\tilde{x}_{l}^{j}$ fixed to 1. For instance, pOSE \cite{hong2018pose} -- extensively used in our analysis (see Supplemental), proposes the following minimization problem: 
\begin{equation}\label{eq:pose}
F_{pOSE}(x_{p},x_{l}) = \sum_{(i,j)\in \Omega} \begin{Vmatrix} 
\sqrt{1-\eta}(x_{p}^{i, 1:2} \tilde{x}_{l}^{j} - (x_{p}^{i,3} \tilde{x}_{l}^{j})m_{ij}) \\
\sqrt{\eta}(x_{p}^{i,1:2}\tilde{x}_{l}^{j} - m_{ij}) 
\end{Vmatrix}_{2}^{2}\, ,
\end{equation}
with $x_{p}^{i,1:2}$ and $x_{p}^{i,3}$ respectively the first two rows and the third row of $x_{p}^{i}$, and $\eta \in [0,1]$.

Hong et al. \cite{hong2017revisiting} argue the superiority of VarPro over joint optimization to solve the previous equation, due to the random initialization of this stage.

\subsubsection*{Second stage: projective refinement.}
The cameras and landmarks parameters obtained by solving Eq.~\ref{eq:ba_first_stage} are refined by minimizing the projective standard objective \cite{hong2016projective} over the projective camera models in homogeneous form ($\{\tilde{x}_{p}^i ~|~ \mathrm{vec}(\tilde{x}_p^i) \in\, S^{12}\}$) and the 3D landmarks in homogeneous coordinates ($\{\tilde{x}_{l}^j~|~ \tilde{x}_l^j \in\, S^4 \})$:
\begin{equation}\label{eq:second_stage}
\sum_{(i,j) \in \Omega} \big\lVert \pi(\tilde{x}^{i}_{p}\tilde{x}^{j}_{l}) - m_{ij} \big\rVert_{2}^{2}.
\end{equation}
Importantly, the optimization is performed in homogeneous coordinates. Especially, Riemannian manifold optimization \cite{absil2008optimization} has to be incorporated. 

\subsubsection*{Metric upgrade.} The last stage is a minimization problem to enforce the projective camera matrices to satisfy $SE(3)$ properties. Note that our work mostly focuses on the first two stages. The proposed implementation for this third stage has illustrative purpose, see Fig.~\ref{fig:intro}. We refer the reader to Supplemental for further details.

\subsection{Limitations and proposed method}

In contrast to the Levenberg-Marquardt algorithm, few solvers have been designed to efficiently solve VarPro. In practice, even the most recent works \cite{hong2018pose,iglesias2023expose} use a direct factorization (e.g. Cholesky decomposition, QR factorization), that is well-known to be poorly scalable (see e.g. \cite{agarwal2010bundle}), to solve Eq.~\ref{eq:ba_first_stage}. In particular, we note that these works consider bundle adjustment problems with only few tens of cameras -- sometimes less than ten. On the other hand, Hong and Fitzgibbon \cite{hong2015secrets} show that coupling VarPro with the popular preconditioned conjugate gradients algorithm may not converge efficiently. 

We propose to build on recent inverse expansion method. We first adapt the power series expansion to the VarPro algorithm (\Cref{sec:povar}). We show that the so-called \textit{PoVar} efficiently solves Eq.~\ref{eq:ba_first_stage}. Moving forward, we extend the power series expansion to Riemannian manifold optimization (\Cref{sec:po_rie}). This new Riemannian framework, that we call \textit{RiPoBA}, is necessary to use expansion method for solving Eq.~\ref{eq:second_stage}. We demonstrate that the combination of this two solvers is highly competitive (\Cref{sec:experiments}).

\section{Power Variable Projection}

We start by revisiting the VarPro algorithm. We refer to \cite{golub1973differentiation} for further details.

\subsection{Revisited variable projection}
In contrast to joint optimization, VarPro optimizes over landmark parameters and camera parameters, but in a way different from the standard alternating least squares such that landmarks are not assumed to be fixed when updating the camera parameters.
It first considers Eq.~\ref{eq:ba_first_stage} as a nonlinear minimization problem over $\tilde{x}_{l}$ only. Due to the separability of the equation, a closed-form solution for optimal $\tilde{x}_{l}^{*}(x_{p})$ is straightforward:

\begin{equation}\label{eq:update_landmark}
\tilde{x}_{l}^{*}(x_{p}) = \argmin_{\tilde{x}_{l}} \lVert G(x_{p})\tilde{x}_{l} - z(x_{p}) \rVert_{2}^{2} = G(x_{p})^{\dag}z(x_{p}) \, ,
\end{equation}
with $G(x_{p})^{\dag}$ the pseudo-inverse of $G(x_{p})$. As we set the last coordinate of $\tilde{x}_{l}$ to $1$, we normalize $\tilde{x}_{l}^{*}(x_{p})$ by its last coordinate and its substitution in Eq.~\ref{eq:ba_first_stage}  leads to the following reduced problem:
\begin{equation}\label{eq:reduced_varpro}
\min_{x_{p}} r^{*}(x_{p}) = \min_{x_{p}} \lVert (G(x_{p})G^{\dag}(x_{p})-I)z(x_{p}) \rVert_{2}^{2} \,.
\end{equation}
Following \cite{hong2017revisiting}, Eq.~\ref{eq:reduced_varpro} can be solved with LM algorithm over $x_{p}$. The Jacobian of $r^{*}$ is approximated with the so-called RW2 approximation \cite{kaufman1975variable}, that leads to the normal equation: 
\begin{equation}
    \begin{pmatrix}
    U_{\lambda} && W \\
    W^{\top}    && V_{0}
\end{pmatrix} \begin{pmatrix}
\Delta x_{p} \\ \Delta x_{l}
\end{pmatrix} = - 
\begin{pmatrix}
b_{p} \\ b_{l}
\end{pmatrix} \, ,
\end{equation}
where 
\begin{align}
U_{\lambda} = J_{p}^{\top}J_{p} + \lambda D_{p}^{\top}D_{p} \, , \\
V_{0} = J_{l}^{\top} J_{l} \, , \quad W = J_{p}^{\top}J_{l} \, , \\
b_{p} = J_{p}^{\top}r^{0} \, , \quad b_{l} = J_{l}^{\top}r^{0} \, ,
\end{align}
with $J_{l}$ and $J_{p}$ respectively the landmark and pose Jacobians of the original residual (Eq.~\ref{eq:ba_first_stage}) around an equilibrium $r^{0}$, and $D_{p}$ diagonal damping matrix for pose variables.
In contrast to joint optimization, only the pose Jacobian is damped in the associated Hessian. It follows that $U_{\lambda}$ is symmetric positive-definite \cite{triggs2000bundle}, whereas $V_{0}$ is only guaranteed to be symmetric positive-semidefinite.
Nevertheless, we observe $V_{0}$ is usually of full rank unless a 3D landmark is observed by few cameras with narrow baselines.
By using the Schur complement trick, the update equation for VarPro becomes:
\begin{equation}\label{eq:pose_update}
(U_{\lambda} - WV_{0}^{-1}W^{\top})\Delta x_{p} = b_{p}-WV_{0}^{-1}b_{l}\, .
\end{equation} 
Note that the Schur complement associated to VarPro:
\begin{equation}\label{eq:schur_varpro}
S^{V} = U_{\lambda} - WV_{0}^{-1}W^{\top} \, ,
\end{equation}
while sharing a close structure to the Schur complement of the traditional BA problem, has a  different convergence behaviour, due to the undamped landmark Jacobian $V_{0}$.

\subsection{Power series for VarPro}\label{sec:povar}
Inspired by Weber et al. \cite{weber2023power}, the following lemma holds, even if $V_{0}$ is only symmetric positive-semidefinite:
\begin{lemma} 
Let $\mu$ be an eigenvalue of $U_{\lambda}^{-1}WV_{0}^{\dag}W^{\top}$. Then 
\begin{equation}
    0 \leq \mu < 1 \, .
\end{equation}
\end{lemma}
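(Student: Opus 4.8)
The plan is to exploit the fact that, although $U_{\lambda}^{-1}WV_{0}^{\dag}W^{\top}$ is not symmetric, it is similar to a symmetric matrix, so its eigenvalues are real and can be controlled by congruence arguments. Since $U_{\lambda}$ is symmetric positive-definite it admits a symmetric positive-definite square root $U_{\lambda}^{1/2}$, and conjugating by $U_{\lambda}^{1/2}$ shows that $U_{\lambda}^{-1}WV_{0}^{\dag}W^{\top}$ shares its spectrum with
\[
M := U_{\lambda}^{-1/2}\, W V_{0}^{\dag} W^{\top}\, U_{\lambda}^{-1/2}.
\]
This matrix is symmetric, because $V_{0}^{\dag}$ inherits symmetry and positive-semidefiniteness from $V_{0}=J_{l}^{\top}J_{l}$. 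Hence every eigenvalue $\mu$ is real, and it suffices to bound the spectrum of $M$, i.e. to show $0 \preceq M \prec I$.

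The key technical step is to rewrite $WV_{0}^{\dag}W^{\top}$ through the projector interpretation of the pseudo-inverse. Substituting $W=J_{p}^{\top}J_{l}$ and $V_{0}=J_{l}^{\top}J_{l}$ gives
\[
WV_{0}^{\dag}W^{\top} = J_{p}^{\top}\big[\, J_{l}(J_{l}^{\top}J_{l})^{\dag}J_{l}^{\top}\,\big]J_{p} = J_{p}^{\top}P_{l}J_{p},
\]
where $P_{l}:=J_{l}(J_{l}^{\top}J_{l})^{\dag}J_{l}^{\top}$ is the orthogonal projector onto $\operatorname{range}(J_{l})$. This identity is exactly what rescues the argument when $V_{0}$ is only positive-semidefinite: even without full column rank of $J_{l}$, the Moore--Penrose pseudo-inverse still yields a genuine orthogonal projector satisfying $0 \preceq P_{l} \preceq I$. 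In particular $WV_{0}^{\dag}W^{\top}=J_{p}^{\top}P_{l}J_{p}\succeq 0$, so by congruence $M \succeq 0$ and therefore $\mu \geq 0$, which settles the lower bound.

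For the upper bound I would use that $\mu<1$ for all eigenvalues of $M$ is equivalent, by congruence with the invertible $U_{\lambda}^{-1/2}$, to $U_{\lambda}-WV_{0}^{\dag}W^{\top}\succ 0$. Computing this difference and using the projector decomposition gives
\[
U_{\lambda}-WV_{0}^{\dag}W^{\top} = J_{p}^{\top}(I-P_{l})J_{p} + \lambda D_{p}^{\top}D_{p}.
\]
The first term is positive-semidefinite since $I-P_{l}$ is again a projector, and the damping term $\lambda D_{p}^{\top}D_{p}$ is positive-definite for $\lambda>0$ with full-rank $D_{p}$, so the sum is strictly positive-definite, yielding $\mu<1$. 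The main obstacle is precisely the pseudo-inverse: one must verify carefully that $J_{l}(J_{l}^{\top}J_{l})^{\dag}J_{l}^{\top}$ equals the orthogonal projector onto $\operatorname{range}(J_{l})$ even in the rank-deficient case, and that the spectrum is separated from $1$ strictly. The former is a standard Moore--Penrose identity; the latter hinges entirely on the structural fact emphasized in the excerpt that only the pose block is damped, so strictness rests on $\lambda D_{p}^{\top}D_{p}\succ 0$ rather than on any nondegeneracy of the landmark term.
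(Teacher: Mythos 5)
Your proof is correct, and while it shares the same skeleton as the paper's argument for the lower bound (conjugating by $U_{\lambda}^{-1/2}$ to obtain a symmetric positive-semidefinite matrix, hence $\mu \geq 0$), your treatment of the upper bound takes a genuinely different and more self-contained route. The paper proves $\mu < 1$ by writing $U_{\lambda}^{-1}WV_{0}^{\dag}W^{\top} = I - U_{\lambda}^{-1}S$ and invoking positive-definiteness of the generalized Schur complement $S = U_{\lambda} - WV_{0}^{\dag}W^{\top}$, which it merely asserts ``still holds'' by carrying over the argument of Weber et al.\ from the damped case $V_{\lambda}$ to the undamped $V_{0}$ --- a step that is not entirely innocent, since with an undamped landmark block the full system matrix is only positive-semidefinite and the Schur-complement-of-a-PD-matrix argument no longer applies verbatim. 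Your proof supplies precisely the missing justification: the Moore--Penrose identity $J_{l}(J_{l}^{\top}J_{l})^{\dag}J_{l}^{\top} = P_{l}$ (the orthogonal projector onto $\operatorname{range}(J_{l})$) gives the explicit decomposition $S = J_{p}^{\top}(I-P_{l})J_{p} + \lambda D_{p}^{\top}D_{p}$, from which both semidefiniteness of the subtracted term and strict positive-definiteness of $S$ are immediate, with the strictness visibly resting on the pose-only damping $\lambda D_{p}^{\top}D_{p} \succ 0$. What the paper's route buys is brevity and continuity with the prior PoBA result; what yours buys is an explicit, rank-deficiency-robust handling of the pseudo-inverse and a precise identification of where strict separation from $1$ comes from, which is arguably the more rigorous account of why the lemma survives the passage from $V_{\lambda}$ to $V_{0}$.
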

\begin{proof}
    We refer the reader to the Supplemental.
\end{proof}
It follows that the pose update $\Delta x_{p}$ in Eq.~\ref{eq:pose_update} can be directly approximated\footnote{As in practice $V_{0}$ is full-rank, and to avoid encumbering notations, we write $V_{0}^{-1}$ instead of the pseudo-inverse in the rest of the paper.} by $x(m)$ with \Cref{proposition} applied to the Schur complement $S^{V}$:
\begin{equation}
x(m) = - \sum_{i=0}^{m}(U_{\lambda}^{-1}WV_{0}^{-1}W^{\top})^{i}U_{\lambda}^{-1} (b_{p}-WV_{0}^{-1}b_{l}) \, .
\end{equation}
Once the pose update is estimated, the landmark update is derived following closed-form Eq.~\ref{eq:update_landmark}.
That extends the inverse expansion method to Variable Projection algorithm. The difference between both solvers is the role of the damped parameters. While it seems slight, this is enough to offer a dissimilar convergence behaviour, as we will see in the experiments. 

Before that, let us investigate Riemannian manifold optimization, that is necessary to solve the second stage of the stratified BA problem. In particular, we show that we can link this framework to expansion method.

\subsection{Power Riemannian manifold optimization}\label{sec:po_rie}



As the projective refinement step in Eq.~\ref{eq:second_stage} involves both camera matrices and 3D landmarks in homogeneous forms, we exhibit local scale freedom for both camera and landmark parameters.
This necessitates incorporation of the Riemannian manifold optimization framework without which the linearized system of equations is always rank-deficient and unsolvable. 
While a complete theoretical overview of such framework can be found in \cite{absil2008optimization}, we formalize Eq.~\ref{eq:second_stage} as:
\begin{equation}
\argmin_{\Delta \tilde x_{p}, \Delta \tilde x_{l}} \lVert f(\tilde x_{p}+\Delta \tilde x_{p}, \tilde x_{l} + \Delta \tilde x_{l}) \rVert_{2}^{2},
\end{equation}
where $\tilde x_p\in\mathbb{R}^{12 n_{p}}$ denotes the stack of vectorized homogeneous camera parameters, $\tilde x_l\in\mathbb{R}^{4 n_{l}}$ denotes the stack of homogeneous 3D landmarks and $\Delta \tilde x_{p}\in\mathbb{R}^{12 n_{p}}$ and $\Delta \tilde x_l\in\mathbb{R}^{4 n_{l}}$ are the updates in homogeneous camera parameters and 3D landmarks respectively.
The unknowns are searched in the tangent space of current $\tilde x = [\tilde x_{p}^\top, \tilde x_{l}^\top]^\top$, that we note $\tilde x^{\perp}\in\mathbb{R}^{(12n_{p}+4n_{l}) \times (11n_{p}+3n_{l})}$ such that $(\tilde x^{\perp})^{\top} x = 0$. To simplify notations, $\tilde x_{p}$ is considered as a vector in this section, and $\tilde x_{p}^{\perp}\in\mathbb{R}^{12n_{p} \times 11n_{p}}$, $\tilde x_{l}^{\perp}\in\mathbb{R}^{4n_{l} \times 3n_{l}}$ are block-diagonal, each block corresponding to the associated pose $i$ and landmark $j$, respectively. The projective refinement becomes:
\begin{equation}
\argmin_{\Delta \tilde x \perp \tilde x} \lVert f(\tilde x_{p}+\Delta \tilde x_{p}, \tilde x_{l} + \Delta \tilde x_{l}) \rVert_{2}^{2} \, .
\end{equation}
By coupling Riemannian manifold optimization and LM algorithm, and according to \cite{absil2008optimization}, we get the following normal equation, projected onto the tangent space of $\tilde x$:
\begin{equation}\label{eq:riemannian_normal}
\begin{pmatrix}
    (\tilde x_{p}^{\perp})^{\top}U_{\lambda}\tilde x_{p}^{\perp} && (\tilde x_{p}^{\perp})^{\top}W \tilde x_{l}^{\perp} \\
    (\tilde x_{l}^{\perp})^{\top}W^{\top}\tilde x_{p}^{\perp}    && (\tilde x_{l}^{\perp})^{\top}V_{\lambda}\tilde x_{l}^{\perp}
\end{pmatrix} \begin{pmatrix}
\Delta x_{p} \\ \Delta x_{l}
\end{pmatrix} = - 
\begin{pmatrix}
(\tilde x_{p}^{\perp})^{\top}b_{p} \\ (\tilde x_{l}^{\perp})^{\top} b_{l} 
\end{pmatrix}
\, .
\end{equation}
where $\Delta x_p\in\mathbb{R}^{11n_{p}}$ and $\Delta x_l\in\mathbb{R}^{3n_{l}}$ are the camera update and the landmark update respectively made on the tangent space of $x$.
By keeping coherent notations we can note the projected Jacobians and the projected damping parameters onto the tangent space of $x$ as: 
\begin{align}
\tilde{J}_{p} = J_{p} \tilde x_{p}^{\perp} \, , \quad  \tilde{J}_{l} = J_{l} \tilde x_{l}^{\perp} \, , \quad \tilde{\lambda}_{p} = (\tilde x_{p}^{\perp})^{\top}\lambda \tilde x_{p}^{\perp} \, , \quad \tilde{\lambda}_{l} = (\tilde x_{l}^{\perp})^{\top}\lambda \tilde x_{l}^{\perp} \, ,
\end{align}
and then Eq.~\ref{eq:riemannian_normal} becomes: 
\begin{equation}
\begin{pmatrix}
    \tilde{U}_{\tilde{\lambda}} && \tilde{W} \\
    \tilde{W}^{\top}    && \tilde{V}_{\tilde{\lambda}}
\end{pmatrix} \begin{pmatrix}
\Delta x_{p} \\ \Delta x_{l}
\end{pmatrix} = - 
\begin{pmatrix}
\tilde{b}_{p} \\ \tilde{b}_{l} 
\end{pmatrix} \, ,
\end{equation}
where 
\begin{align}
\tilde{U}_{\tilde{\lambda}} = \tilde{J}_{p}^{\top}\tilde{J}_{p} + D_{p}^{\top} \tilde{\lambda}_{p} D_{p} \, , \\
\tilde{V}_{\tilde{\lambda}} = \tilde{J}_{l}^{\top} \tilde{J}_{l} + D_{l}^{\top} \tilde{\lambda}_{l} D_{l} \, , \\ W = \tilde{J}_{p}^{\top}\tilde{J}_{l} \, , \\
\tilde{b}_{p} = \tilde{J}_{p}^{\top}r^{0} \, , \quad \tilde{b}_{l} = \tilde{J}_{l}^{\top}r^{0} \, ,
\end{align}

We have unified the notations of bundle adjustment with Riemannian manifold optimization. As the projection $x^{\perp}$ is full-rank, it follows that $\tilde{U}_{\tilde{\lambda}}$ and $\tilde{V}_{\tilde{\lambda}}$ are symmetric positive-definite (see Supplemental), and then the associated Riemannian Schur complement:
\begin{equation}
\tilde{S} = \tilde{U}_{\tilde{\lambda}} - \tilde{W}\tilde{V}_{\tilde{\lambda}}^{-1}\tilde{W}^{\top} 
\end{equation}
satisfies the assumption of \Cref{proposition}:
\begin{lemma}
Let $\tilde{\mu}$ be an eigenvalue of $\tilde{U}_{\tilde{\lambda}}^{-1}\tilde{W}\tilde{V}_{\tilde{\lambda}}^{-1}\tilde{W}^{\top}$. Then 
\begin{equation}
    0 \leq \tilde{\mu} < 1 \, .
\end{equation}
\end{lemma}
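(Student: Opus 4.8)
The plan is to mirror the strategy behind the earlier lemma on $U_{\lambda}^{-1}WV_{0}^{\dag}W^{\top}$, but the Riemannian setting is in fact cleaner: since the tangent-space projections are full-rank, both $\tilde{U}_{\tilde{\lambda}}$ and $\tilde{V}_{\tilde{\lambda}}$ are genuinely symmetric positive-definite (as already established above), so no pseudo-inverse is needed. I would split the claim into the lower bound $\tilde{\mu}\geq 0$ and the strict upper bound $\tilde{\mu}<1$. For the lower bound, write $A:=\tilde{U}_{\tilde{\lambda}}^{-1}\tilde{W}\tilde{V}_{\tilde{\lambda}}^{-1}\tilde{W}^{\top}=\tilde{U}_{\tilde{\lambda}}^{-1}B$ with $B:=\tilde{W}\tilde{V}_{\tilde{\lambda}}^{-1}\tilde{W}^{\top}$. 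Because $\tilde{V}_{\tilde{\lambda}}\succ 0$ gives $\tilde{V}_{\tilde{\lambda}}^{-1}\succ 0$, the matrix $B$ is symmetric positive-semidefinite. Since $\tilde{U}_{\tilde{\lambda}}\succ 0$ admits a symmetric positive-definite square root, $A$ is similar to $\tilde{U}_{\tilde{\lambda}}^{-1/2}B\,\tilde{U}_{\tilde{\lambda}}^{-1/2}$, which is symmetric positive-semidefinite and therefore has only nonnegative eigenvalues, yielding $\tilde{\mu}\geq 0$.

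For the strict upper bound I would show it is equivalent to positive-definiteness of the Riemannian Schur complement $\tilde{S}=\tilde{U}_{\tilde{\lambda}}-B$. Any such $\tilde{\mu}$ solves the generalized eigenproblem $Bv=\tilde{\mu}\,\tilde{U}_{\tilde{\lambda}}v$ for some $v\neq 0$; if $\tilde{S}\succ 0$ then $v^{\top}\tilde{U}_{\tilde{\lambda}}v>v^{\top}Bv=\tilde{\mu}\,v^{\top}\tilde{U}_{\tilde{\lambda}}v$, and dividing by the strictly positive quantity $v^{\top}\tilde{U}_{\tilde{\lambda}}v$ gives $\tilde{\mu}<1$. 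It therefore remains to prove $\tilde{S}\succ 0$.

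This reduces to showing that the full projected Hessian $H=\left(\begin{smallmatrix}\tilde{U}_{\tilde{\lambda}} & \tilde{W}\\ \tilde{W}^{\top} & \tilde{V}_{\tilde{\lambda}}\end{smallmatrix}\right)$ is positive-definite, since the Schur complement of a positive-definite block ($\tilde{V}_{\tilde{\lambda}}$) inside a positive-definite matrix is itself positive-definite. Writing $\tilde{J}=[\tilde{J}_{p}\mid\tilde{J}_{l}]$, I would decompose $H=\tilde{J}^{\top}\tilde{J}+\tilde{\Lambda}$ with block-diagonal damping $\tilde{\Lambda}=\mathrm{diag}(D_{p}^{\top}\tilde{\lambda}_{p}D_{p},\,D_{l}^{\top}\tilde{\lambda}_{l}D_{l})$. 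The Gram part $\tilde{J}^{\top}\tilde{J}$ is positive-semidefinite, and $\tilde{\Lambda}$ is positive-definite: each projected damping block $\tilde{\lambda}_{p}=(\tilde{x}_{p}^{\perp})^{\top}\lambda\,\tilde{x}_{p}^{\perp}$ is positive-definite because $\lambda>0$ and the projections $\tilde{x}_{p}^{\perp},\tilde{x}_{l}^{\perp}$ have full column rank, and conjugation by the invertible diagonal matrices $D_{p},D_{l}$ preserves positive-definiteness. Hence $H\succ 0$, so $\tilde{S}\succ 0$ and $\tilde{\mu}<1$.

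The main obstacle is the careful handling of the strict positive-definiteness of the damping term after projection onto the tangent space: this is exactly where the Riemannian formulation departs from the plain VarPro lemma, in which $V_{0}$ was only positive-semidefinite. It is precisely the full-rank property of $\tilde{x}^{\perp}$ (the same property already invoked to declare $\tilde{U}_{\tilde{\lambda}},\tilde{V}_{\tilde{\lambda}}$ positive-definite) that rescues the argument; everything else is a routine Schur-complement and similarity computation.
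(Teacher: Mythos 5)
Your proof is correct, and the lower-bound half ($\tilde{\mu}\geq 0$ via similarity of $\tilde{U}_{\tilde{\lambda}}^{-1}\tilde{W}\tilde{V}_{\tilde{\lambda}}^{-1}\tilde{W}^{\top}$ to the symmetric positive-semidefinite matrix $\tilde{U}_{\tilde{\lambda}}^{-1/2}\tilde{W}\tilde{V}_{\tilde{\lambda}}^{-1}\tilde{W}^{\top}\tilde{U}_{\tilde{\lambda}}^{-1/2}$) is exactly the paper's argument. Where you genuinely diverge is in the upper bound. The paper adapts its Lemma~1 proof: it writes $\tilde{U}_{\tilde{\lambda}}^{-1}\tilde{W}\tilde{V}_{\tilde{\lambda}}^{-1}\tilde{W}^{\top} = I - \tilde{U}_{\tilde{\lambda}}^{-1}\tilde{S}$ and argues that $\tilde{U}_{\tilde{\lambda}}^{-1}\tilde{S}$ has strictly positive eigenvalues by similarity to $\tilde{U}_{\tilde{\lambda}}^{-1/2}\tilde{S}\tilde{U}_{\tilde{\lambda}}^{-1/2}$, but it takes the positive-definiteness of the Schur complement $\tilde{S}$ itself as given (inherited from Weber et al.\ without re-derivation). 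You instead (i) pass to the generalized eigenproblem $\tilde{W}\tilde{V}_{\tilde{\lambda}}^{-1}\tilde{W}^{\top}v = \tilde{\mu}\,\tilde{U}_{\tilde{\lambda}}v$ and conclude via quadratic forms, and (ii) actually prove $\tilde{S}\succ 0$ by decomposing the full projected Hessian as $\tilde{J}^{\top}\tilde{J} + \tilde{\Lambda}$ with $\tilde{\Lambda}\succ 0$ (using the full column rank of $\tilde{x}_{p}^{\perp},\tilde{x}_{l}^{\perp}$ and $\lambda>0$) and invoking the standard fact that Schur complements of positive-definite matrices are positive-definite. Your version is therefore more self-contained: it closes the gap the paper leaves by citation, and it correctly identifies that in the Riemannian setting \emph{both} damping blocks are positive-definite, which is precisely what makes the clean Hessian-based argument available (it would fail for Lemma~1, where the landmark block $V_{0}$ is undamped). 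The paper's route is shorter given the cited facts; note only that both your argument and the paper's implicitly require the diagonal damping matrices $D_{p}, D_{l}$ to be nonsingular, so this is a shared assumption rather than a gap on your side.
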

The power series expansion can be applied to the inverse Riemannian Schur complement:
\begin{equation}
\tilde{S}^{-1} \approx \sum_{i=0}^{m}(\tilde{U}_{\tilde{\lambda}}^{-1}\tilde{W}\tilde{V}_{\tilde{\lambda}}^{-1}\tilde{W}^{\top})^{i}\tilde{U}_{\tilde{\lambda}}^{-1} \, ,
\end{equation}
to get pose updates, and then landmark updates by back-substitution.

Finally, the homogeneous pose and landmark updates   $\Delta \tilde x_{p}$ and $\Delta \tilde x_{l}$ are retrieved by back-projecting pose and landmark updates in the tangent space to the original vector space dimension as follows:
\begin{align*}
\Delta \tilde{x}_{p} = \tilde{x}_{p}^{\perp}\Delta x_{p} \, , \quad \Delta \tilde{x}_{l} = \tilde{x}_{l}^{\perp}\Delta x_{l}.
\end{align*}
After above updates are added to the pose and landmark parameters $x$, we carry out manifold retraction by normalizing individual vectors of camera parameters and 3D landmarks to maintain their normalized homogeneous forms.

We call this projective framework \textit{Riemannian PoBA} (RiPoBA), that extends PoBA \cite{weber2023power} to the Riemannian manifold optimization framework.

\begin{figure}[tb]
\begin{center}
\includegraphics[width=1\textwidth]{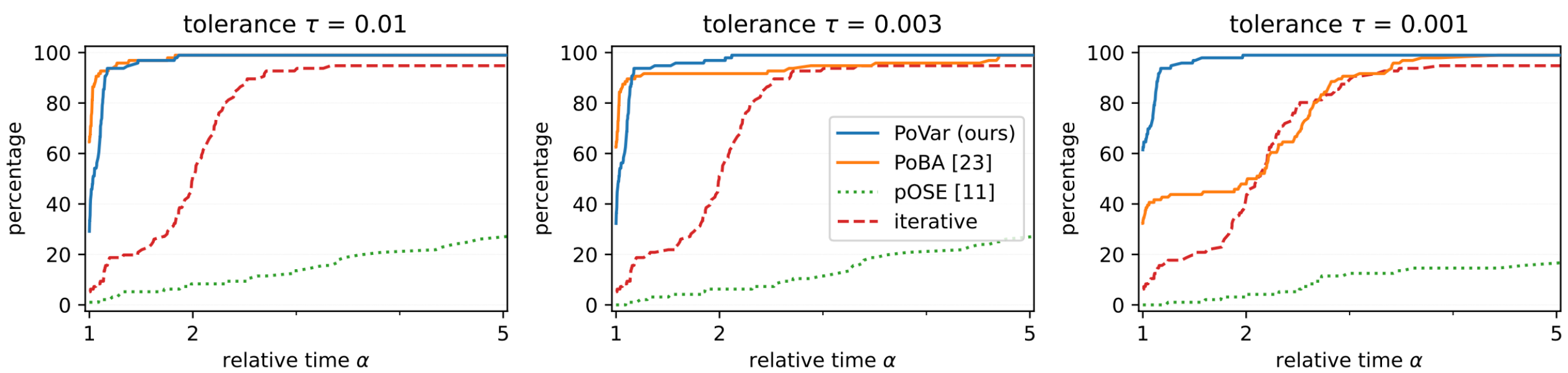}
\caption{Average performance profiles across all BAL problems for solving the first stage Eq.~\ref{eq:ba_first_stage}. Given a tolerance $\tau \in \{0.01, 0.003, 0.001 \}$, it represents the percentage of solved problems ($y$-axis) with relative runtime $\alpha$ ($x$-axis). Expansion methods \textit{PoVar} and \textit{PoBA} show outstanding speed-accuracy results. Our solver \textit{PoVar} is competitive, and most notably for the highest accuracy $\tau = 0.001$.}
\label{fig:stage1_pp}
\end{center}
\end{figure}

\begin{figure}[tb]
\begin{center}
\includegraphics[width=1\textwidth]{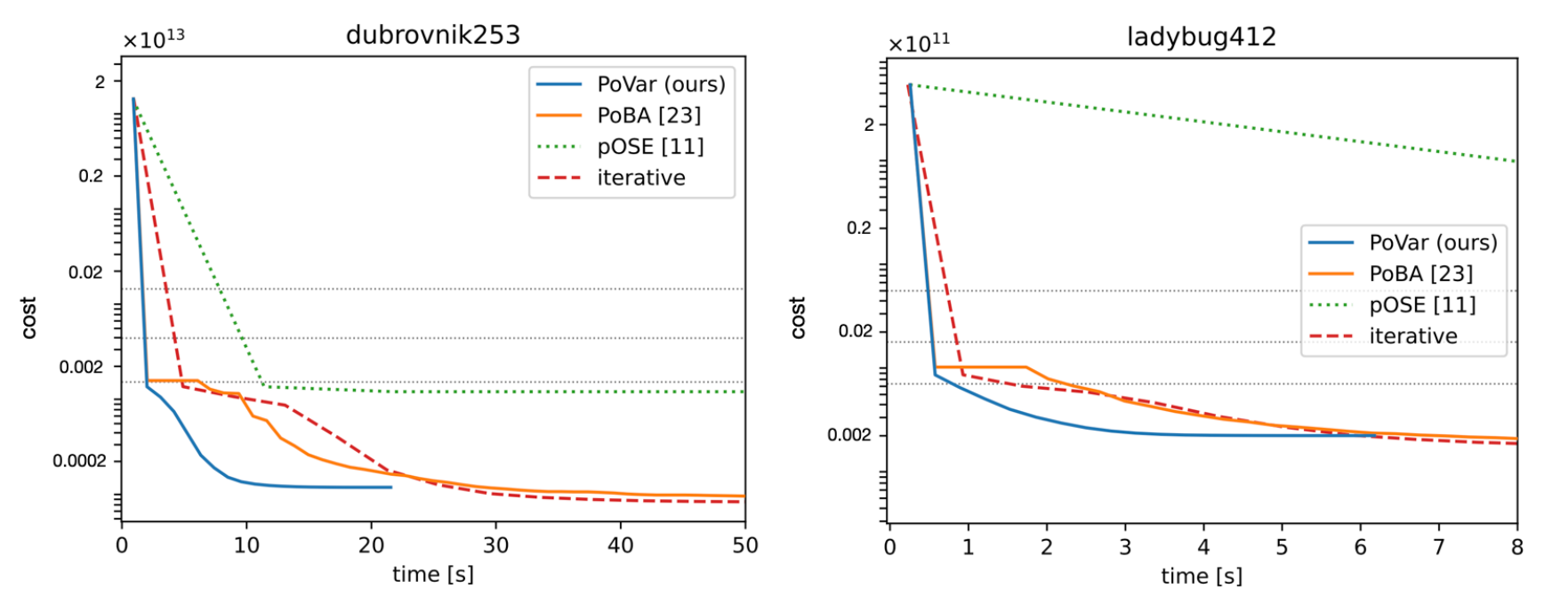}
\caption{Convergence plots of \textit{Dubrovnik-253} (left) from BAL datasets with 253 poses and \textit{Ladybug-412} with 412 poses, for solving the first stage (Eq.~\ref{eq:ba_first_stage}). The dotted lines correspond to cost thresholds for tolerance $\tau \in \{0.01, 0.003, 0.001 \}$.}
\label{fig:stage1_cost}
\end{center}
\end{figure}

\begin{figure}[tb]
\begin{center}
\includegraphics[width=1\textwidth]{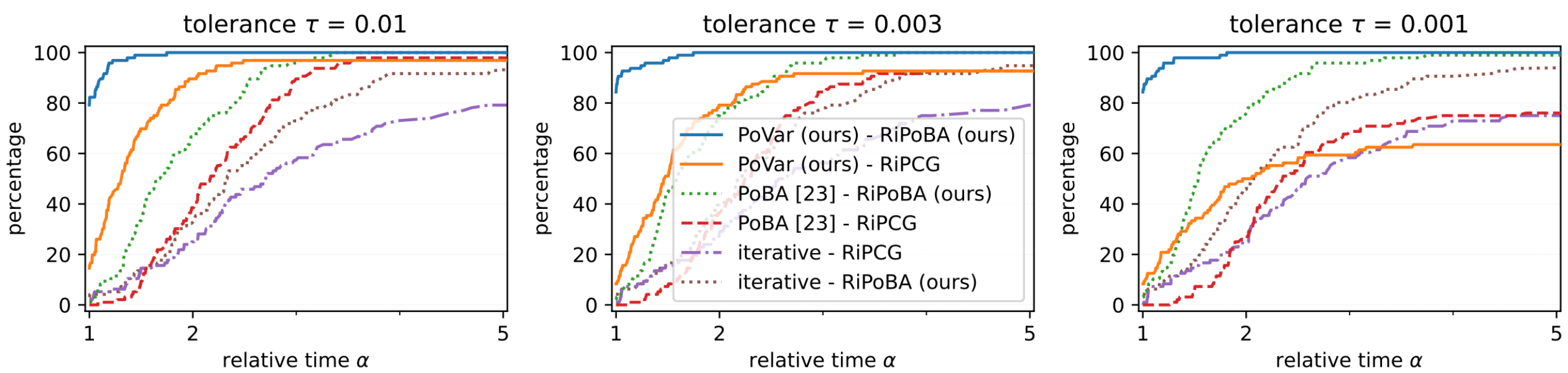}
\caption{Performance profiles for all BAL problems for solving the first two stages (Eq.~\ref{eq:ba_first_stage} and Eq.~\ref{eq:second_stage}). In each combination, the first solver is used to solve the first stage, and the second solver is used for the second stage. Our proposed combination \textit{PoVar} followed by a Riemannian expansion method outperforms by a large margin compared to other competitors. Also, our proposed Riemannian-expansion method \textit{RiPoBA} outperforms the iterative baseline in all cases, given a same solver for the first stage.}
\label{fig:stage2_pp}
\end{center}
\end{figure}

\begin{figure}[tb]
\begin{center}
\includegraphics[width=1\textwidth]{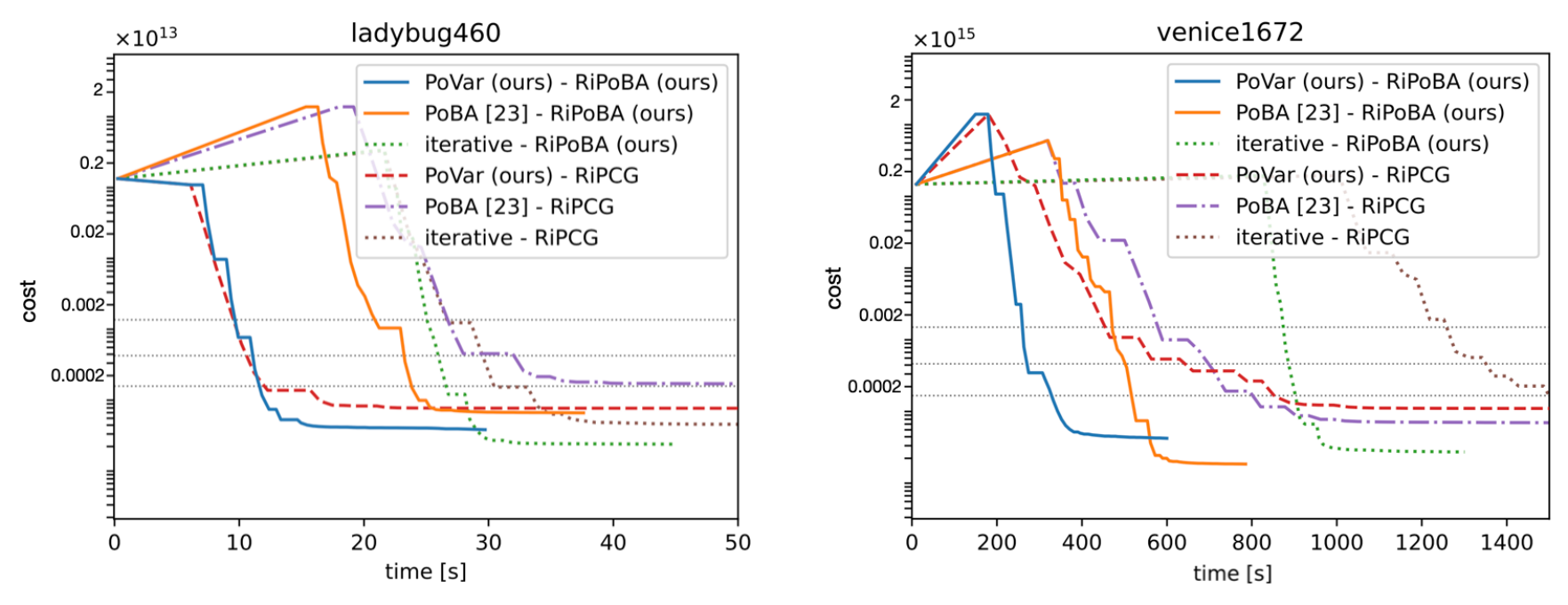}
\caption{Convergence plots of \textit{Ladybug-460} (left) from BAL datasets with 460 poses and \textit{Venice-1672} with 1672 poses, for solving the second stage (Eq.~\ref{eq:second_stage}). The dotted lines correspond to cost thresholds for tolerance $\tau \in \{0.01, 0.003, 0.001 \}$. Note that for fair comparison, the initial cost is derived before the first stage. The second cost -- that may be higher than the initial one, is the initial cost of the second stage, after the first stage has been run. The runtime includes the time spent to solve the first stage.}
\label{fig:stage2_cost}
\end{center}
\end{figure}

\section{Experiments}\label{sec:experiments}

\subsection{Implementation}
We implement \textit{pOSE}\footnote{We use our custom implementation as the official code of \cite{hong2018pose} is not publicly available.} \cite{hong2018pose}, \textit{PoVar} and \textit{RiPoBA} framework in C++, directly on the publicly available implementation of PoBA \footnote{\url{https://github.com/simonwebertum/poba}} \cite{weber2023power}. That leads to fair comparisons with this recent and challenging solver. \textit{pOSE} differs from \textit{PoVar} by the use of a direct sparse Cholesky factorization. We also compare to VarPro with the conjugate gradients algorithm, preconditioned with Schur-Jacobi preconditioner, called \textit{iterative} in our experiments. For the second stage, we compare \textit{RiPoBA} to the conjugate gradients algorithm preconditioned by Schur-Jacobi preconditioners with Riemannian manifold optimization framework, called \textit{RiPCG}. Except the solver itself, all implementations share much of the code with \cite{weber2023power}. We run experiments on MacOS 14.2.1 with an Intel Core i5 (4 cores at 2GHz).
\subsection{Experimental settings}

\subsubsection{Setup.}
For each stage, we set the maximum number of iterations to $50$, stopping earlier if a relative function tolerance of $10^{-6}$ is achieved. The damping factor $\lambda$ starts for each stage at $10^{-4}$ and is updated accordingly to the success or failure of the iteration. For expansion methods, we set the maximal order of power series to $20$ and a threshold to $0.01$. For iterative methods, we set the maximum number of inner iterations to $500$. We set $\eta$ for pOSE~(Eq.~\ref{eq:pose}) to $0.1$.

\begin{figure}[tb]
\begin{center}
\includegraphics[width=1\textwidth]{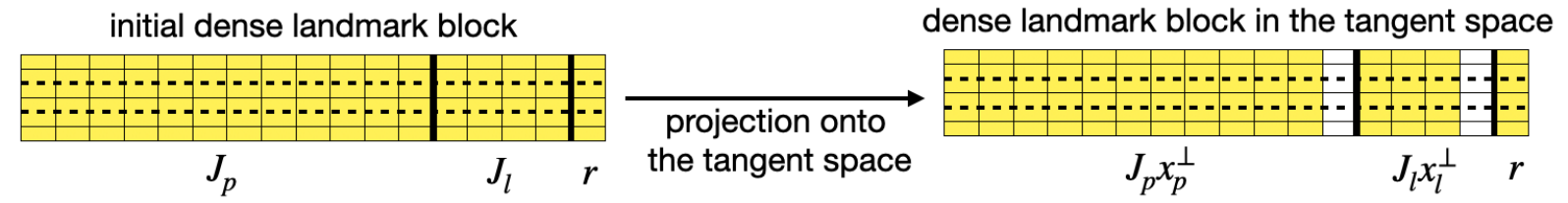}
\caption{Example of dense landmark block in the tangent space with 3 observations from a single landmark. We project the initial dense landmark block (\textit{left}) onto the tangent space by applying to each $i$-th pose Jacobian $J_{p_i} \in \mathbb{R}^{2 \times 12}$ the projection $\tilde x_{p_i}^{\perp}$, and to each $j$-th landmark Jacobian $J_{l_j} \in \mathbb{R}^{2 \times 4}$ the projection $\tilde x_{l_j}^{\perp}$. The resulting pose and landmark Jacobians in the tangent space (\textit{right}) belong to $\mathbb{R}^{2 \times 11}$ and $\mathbb{R}^{2 \times 3}$ respectively.}
\label{fig:dense_storage}
\end{center}
\end{figure}

\subsubsection{Efficient storage for Riemannian manifold optimization framework.} As in \cite{weber2023power}, we leverage the special structure of BA problem and propose a memory-efficient storage. We organize the landmarks into dense blocks. In particular, we apply on each row associated to a landmark the block-matrices of the projection $\tilde x^{\perp}_{l_j},\tilde x^{\perp}_{p_i}$ corresponding to the landmark and to the cameras stored in the considered dense landmark block (see Fig.~\ref{fig:dense_storage}). 
\subsubsection{Dataset.}
We extensively evaluate our solver and the baselines on the $97$ real-world bundle adjustment problems from the BAL project page \cite{agarwal2010bundle}. The number of poses goes from $16$ to $13682$.  We refer the reader to Supplemental for further details about these problems. For each problem, we only keep the observation measurements. Pose parameters are randomly drawn from an isotropic Gaussian distribuaton with mean $0$ and variance $1$, and landmark parameters are deduced with Eq.~\ref{eq:update_landmark}. Notably and contrary to previous works on initialization-free BA, each solver is ran on the same randomized problem, for fair comparisons.

\subsection{Performance profile}
We jointly evaluate both runtime and accuracy with performance profiles \cite{dolan2002benchmarking}. Given a solver, the performance profile maps the relative runtime $\alpha$ to the percentage of problems solved with accuracy $\tau$. Graphically, the performance profile of a given solver is the percentage of problems solved faster than the relative runtime $\alpha$ on the $x$-axis. Let be $S$ and $P$ the sets of solvers and problems, respectively. In practice, we can define the objective threshold for a problem $p$ by:
\begin{equation}\label{eq:pp}
f_{\tau}(p) = f^{*}(p) + \tau (f^{0}(p) - f^{*}(p)) \, ,
\end{equation}
with $f^{0}(p)$ the initial objective and $f^{*}(p)$ the smallest error reached by the family of solvers. The runtime a solver $s$ needs to reach this threshold is noted $T_{\tau}(p,s)$. The performance profile of a solver for a relative runtime $\alpha$ is defined as:
\begin{equation}
\rho (s,\alpha) = \frac{100}{\lvert P \rvert} \lvert \{ p \in P | T_{\tau}(p,s) \leq \alpha \min_{s\in S} T_{\tau}(p,s) \} \lvert \, .
\end{equation}
Graphically, a curve on the left of the performance profile is linked to better runtime, whereas a curve on the right is related to better accuracy. Note that for meaningful comparison, all solvers should have the same initial objective.

\subsection{Analysis}
\subsubsection{First stage.} 
Fig.~\ref{fig:stage1_pp} shows the performance profiles for all BAL datasets with tolerances $\tau \in \{0.01, 0.003, 0.001\}$ to solve Eq.~\ref{eq:ba_first_stage}. As expected, the direct factorization solver (dashed green) used in Hong et al. \cite{hong2018pose} shows poor performance. Our solver \textit{PoVar} (blue) challenges \textit{PoBA} for the largest tolerance $0.01$, and is by far the most competitive solver for the smallest tolerance $\tau = 0.001$, that is for the highest accuracy. For $\tau = 0.003$, \textit{PoVar} slightly outperforms \textit{PoBA}. We also highlight that our solver outperforms the two competitors associated to the VarPro algorithm, iterative (dashed red) and direct factorization. 

Fig.~\ref{fig:stage1_cost} shows two examples of the cost decrease during the first stage. Notably, \textit{PoVar} (blue) converges much more smoothly than its main challenger \textit{PoBA} which gets stuck in early iterations. By considering the intersections of the solvers with the cost thresholds (dashed grey lines), it also demonstrates the slowness of the iterative method compared to expansion methods in terms of runtime.
\subsubsection{First and second stage.}
Fig.~\ref{fig:stage2_pp} shows the performance profiles for all BAL datasets with tolerances $\tau \in \{ 0.01, 0.003, 0.001 \}$ to solve the first two stages, that are Eq.~\ref{eq:ba_first_stage} followed by Eq.~\ref{eq:second_stage}. Note that we compare in this experiment the cost of the second stage only, as the first stage is only used to get an approximated initialization for the projective formulation. As direct factorization shows poor performance during the first stage, we only take into account the most promising combinations of solvers. \textit{PoVar} followed by \textit{RiPoBA} (blue) outperforms all the competitors both in terms of runtime and accuracy. The combinations with \textit{RiPoBA} outperform \textit{RiPCG} for the highest accuracy $\tau = 0.001$ for all relative time greater than $\alpha = 2$, that reflects the better convergence of Riemannian expansion method compared to \textit{RiPCG}. We also note, given a same solver for first stage, \textit{RiPoBA} outperforms \textit{RiPCG} during the second stage across all tolerances.

Fig.~\ref{fig:stage2_cost} illustrates on two examples the cost decrease during the second stage. On the left figure (\textit{Ladybug-460}), the best two solvers in terms of final convergence are built on our framework \textit{RiPoBA}. On the right figure (\textit{Venice-1672}), all the solvers using \textit{RiPoBA} converge to a smaller error than their iterative competitor \textit{RiPCG}. By considering the intersection between the solvers and the cost thresholds (dashed grey lines), the combination \textit{PoVar}-\textit{RiPoBA} outperforms all other combinations, and most notably for the largest problem with $1672$ poses. 

\subsubsection{Conclusive remark.} 
The experiments emphasize the high efficacy of our solvers \textit{PoVar} and \textit{RiPoBA}, during both first and second stages of the stratified BA problem. Concerning the first stage, the convergence of \textit{PoVar} is much smoother than its competitors, that explains its larger speed with respect to \textit{PoBA} to reach the cost thresholds, albeit both are built with power series. Regarding the second stage, for a same given solver in the first stage, our \textit{RiPoBA} outperforms the preconditioned conjugate gradients with Riemannian manifold optimization framework in terms of speed and accuracy, especially when coupled with \textit{PoVar}.

\section{Conclusion}

We have introduced a novel approach to address the scalability challenge for initialization-free bundle adjustment. Our proposed Power Variable Projection (PoVar) algorithm, theoretically justified, offers new insights to this uncharted problem. By extending recent inverse expansion techniques to the VarPro algorithm on one hand, and to Riemannian manifold optimization on the other hand, we have demonstrated the capability to efficiently solve large-scale stratified BA problem with thousands of cameras. Notably, we achieve state-of-the-art results in terms of speed and accuracy on the real-world BAL dataset. While initialization-free BA is still in its nascent stage, we hope that our method will pave the way for further exploration of this difficult optimization problem, and will generate further steps towards initialization-free structure-from-motion. 

\subsubsection{Limitations and future work.}
First, our analysis applies to the BA problem in which excess outlier point tracks are assumed to have been filtered out as in previous formulations \cite{hong2016projective,hong2017revisiting,hong2018pose,iglesias2023expose}. Second, the 3D reconstruction in Fig.~\ref{fig:intro} assumes that the intrinsics are known during the metric upgrade stage. In practice, we use the approximated focal lengths given in the BAL dataset, that results in imperfectly accurate illustration. However, some formulations based on pOSE handle unknown intrinsics and could be easily adapted to \textit{PoVar}.

\subsection*{Acknowledgements}
This work was supported in part by the NRF grants funded by the Korea government (MSIT) (No. 2022R1C1C1004907) and in part by the Institute of Information \& communications Technology Planning \& Evaluation (IITP) under the artificial intelligence semiconductor support program to nurture the best talents (IITP-(2024)-RS-2023-00253914) grant funded by the Korea government(MSIT).

\bibliographystyle{splncs04}
\bibliography{main}

\begin{thebibliography}{10}
\providecommand{\url}[1]{\texttt{#1}}
\providecommand{\urlprefix}{URL }
\providecommand{\doi}[1]{https://doi.org/#1}

\bibitem{absil2008optimization}
Absil, P.A., Mahony, R., Sepulchre, R.: Optimization algorithms on matrix manifolds. Princeton University Press (2008)

\bibitem{ceres-solver}
Agarwal, S., Mierle, K., Others: Ceres solver. \url{http://ceres-solver.org} (2024)

\bibitem{agarwal2010bundle}
Agarwal, S., Snavely, N., Seitz, S.M., Szeliski, R.: Bundle adjustment in the large. In: Computer Vision--ECCV 2010: 11th European Conference on Computer Vision, Heraklion, Crete, Greece, September 5-11, 2010, Proceedings, Part II 11. pp. 29--42. Springer (2010)

\bibitem{belder2023game}
Belder, A., Vivanti, R., Tal, A.: A game of bundle adjustment-learning efficient convergence. In: Proceedings of the IEEE/CVF International Conference on Computer Vision. pp. 8428--8437 (2023)

\bibitem{buchanan2005dampednewton}
Buchanan, A.M., Fitzgibbon, A.W.: Damped {Newton} algorithms for matrix factorization with missing data. In: 2005 IEEE Conference on Computer Vision and Pattern Recognition (CVPR). vol.~2, pp. 316--322 (2005). \doi{10.1109/CVPR.2005.118}

\bibitem{demmel2020distributed}
Demmel, N., Gao, M., Laude, E., Wu, T., Cremers, D.: Distributed photometric bundle adjustment. In: 2020 International Conference on 3D Vision (3DV). pp. 140--149. IEEE (2020)

\bibitem{demmel2021square}
Demmel, N., Sommer, C., Cremers, D., Usenko, V.: Square root bundle adjustment for large-scale reconstruction. In: Proceedings of the IEEE/CVF Conference on Computer Vision and Pattern Recognition. pp. 11723--11732 (2021)

\bibitem{dolan2002benchmarking}
Dolan, E.D., Mor{\'e}, J.J.: Benchmarking optimization software with performance profiles. Mathematical programming  \textbf{91},  201--213 (2002)

\bibitem{golub1973differentiation}
Golub, G.H., Pereyra, V.: The differentiation of pseudo-inverses and nonlinear least squares problems whose variables separate. SIAM Journal on numerical analysis  \textbf{10}(2),  413--432 (1973)

\bibitem{hong2015secrets}
Hong, J.H., Fitzgibbon, A.: Secrets of matrix factorization: Approximations, numerics, manifold optimization and random restarts. In: Proceedings of the IEEE International Conference on Computer Vision. pp. 4130--4138 (2015)

\bibitem{hong2018pose}
Hong, J.H., Zach, C.: pose: Pseudo object space error for initialization-free bundle adjustment. In: Proceedings of the IEEE Conference on Computer Vision and Pattern Recognition. pp. 1876--1885 (2018)

\bibitem{hong2017revisiting}
Hong, J.H., Zach, C., Fitzgibbon, A.: Revisiting the variable projection method for separable nonlinear least squares problems. In: 2017 IEEE Conference on Computer Vision and Pattern Recognition (CVPR). pp. 5939--5947. IEEE (2017)

\bibitem{hong2016projective}
Hong, J.H., Zach, C., Fitzgibbon, A., Cipolla, R.: Projective bundle adjustment from arbitrary initialization using the variable projection method. In: Computer Vision--ECCV 2016: 14th European Conference, Amsterdam, The Netherlands, October 11--14, 2016, Proceedings, Part I 14. pp. 477--493. Springer (2016)

\bibitem{iglesias2023expose}
Iglesias, J.P., Nilsson, A., Olsson, C.: expose: Accurate initialization-free projective factorization using exponential regularization. In: Proceedings of the IEEE/CVF Conference on Computer Vision and Pattern Recognition. pp. 8959--8968 (2023)

\bibitem{jeong2010embedded}
Jeong, Y., Nister, D., Steedly, D., Szeliski, R., Kweon, I.S.: Pushing the envelope of modern methods for bundle adjustment. In: 2010 IEEE Conference on Computer Vision and Pattern Recognition (CVPR). pp. 1474--1481 (2010). \doi{10.1109/CVPR.2010.5539795}

\bibitem{kaufman1975variable}
Kaufman, L.: A variable projection method for solving separable nonlinear least squares problems. BIT Numerical Mathematics  \textbf{15},  49--57 (1975)

\bibitem{okatani2011dampedwiberg}
Okatani, T., Yoshida, T., Deguchi, K.: Efficient algorithm for low-rank matrix factorization with missing components and performance comparison of latest algorithms. In: 2011 IEEE International Conference on Computer Vision (ICCV). pp. 842--849 (2011). \doi{10.1109/ICCV.2011.6126324}

\bibitem{ren2022megba}
Ren, J., Liang, W., Yan, R., Mai, L., Liu, S., Liu, X.: Megba: A gpu-based distributed library for large-scale bundle adjustment. In: European Conference on Computer Vision. pp. 715--731. Springer (2022)

\bibitem{ruhe1980separable}
Ruhe, A., Wedin, P.{\r{A}}.: Algorithms for separable nonlinear least squares problems. SIAM Review (SIREV)  \textbf{22}(3),  318--337 (1980). \doi{10.1137/1022057}

\bibitem{strelow2012l1wiberg}
Strelow, D.: General and nested {Wiberg} minimization. In: 2012 IEEE Conference on Computer Vision and Pattern Recognition (CVPR). pp. 1584--1591 (2012). \doi{10.1109/CVPR.2012.6247850}

\bibitem{strelow2012l2wiberg}
Strelow, D.: General and nested {Wiberg} minimization: L2 and maximum likelihood. In: 12th European Conference on Computer Vision (ECCV). pp. 195--207 (2012). \doi{10.1007/978-3-642-33786-4_15}

\bibitem{triggs2000bundle}
Triggs, B., McLauchlan, P.F., Hartley, R.I., Fitzgibbon, A.W.: Bundle adjustment—a modern synthesis. In: Vision Algorithms: Theory and Practice: International Workshop on Vision Algorithms Corfu, Greece, September 21--22, 1999 Proceedings. pp. 298--372. Springer (2000)

\bibitem{weber2023power}
Weber, S., Demmel, N., Chan, T.C., Cremers, D.: Power bundle adjustment for large-scale 3d reconstruction. In: Proceedings of the IEEE/CVF Conference on Computer Vision and Pattern Recognition. pp. 281--289 (2023)

\bibitem{weber2021multidirectional}
Weber, S., Demmel, N., Cremers, D.: Multidirectional conjugate gradients for scalable bundle adjustment. In: DAGM German Conference on Pattern Recognition. pp. 712--724. Springer (2021)

\bibitem{wright2006numerical}
Wright, S.J.: Numerical optimization. Springer (2006)

\bibitem{zhang2006schur}
Zhang, F.: The Schur complement and its applications, vol.~4. Springer Science \& Business Media (2006)

\bibitem{zheng2021power}
Zheng, Q., Xi, Y., Saad, Y.: A power schur complement low-rank correction preconditioner for general sparse linear systems. SIAM Journal on Matrix Analysis and Applications  \textbf{42}(2),  659--682 (2021)

\bibitem{zhou2020stochastic}
Zhou, L., Luo, Z., Zhen, M., Shen, T., Li, S., Huang, Z., Fang, T., Quan, L.: Stochastic bundle adjustment for efficient and scalable 3d reconstruction. In: European Conference on Computer Vision. pp. 364--379. Springer (2020)

\end{thebibliography}


\begin{thebibliography}{1}
\providecommand{\url}[1]{\texttt{#1}}
\providecommand{\urlprefix}{URL }
\providecommand{\doi}[1]{https://doi.org/#1}

\bibitem{iglesias2021radial}
Iglesias, J.P., Olsson, C.: Radial distortion invariant factorization for structure from motion. In: Proceedings of the IEEE/CVF International Conference on Computer Vision. pp. 5906--5915 (2021)

\bibitem{pollefeys1999self}
Pollefeys, M., Koch, R., Gool, L.V.: Self-calibration and metric reconstruction inspite of varying and unknown intrinsic camera parameters. International Journal of Computer Vision  \textbf{32}(1),  7--25 (1999)

\bibitem{wang2013derivatives}
Wang, X., Yang, W., Sun, B.: Derivatives of kronecker products themselves based on kronecker product and matrix calculus. Journal of Theoretical and Applied Information Technology  \textbf{48}(1) (2013)

\end{thebibliography}
\clearpage

\title{Power Variable Projection \\ \vspace{-0.1em} for Initialization-Free Large-Scale \\ \vspace{-0.1em} Bundle Adjustment \\         \large \vspace{0.5em}-- Supplementary Material -- \\
        \vspace{2em}}

\titlerunning{Power Variable Projection - Appendix}
\author{}
\institute{}
\maketitle
\appendix

\noindent This supplemental material is organized as follows:\\
\noindent \textbf{\cref{sec:robust}} studies robustness of \textit{PoVar} with respect to $\eta$ and random initialization, as well as the scale of the considered problems.\\
\noindent \textbf{\cref{sec:lemma}} complements the theoretical justifications of \textit{PoVar} and \textit{RiPoBA}.\\
\noindent \textbf{\cref{sec:pose_formulation}} briefly comments a recent follow-up formulation of pOSE error.\\ 
\noindent \textbf{\cref{sec:metric}} addresses the metric upgrade stage, necessary to estimate the projective transformation and to get Euclidean reconstruction.\\
\noindent \textbf{\cref{sec:dataset}} gives more details about the $97$ BAL problems used in our experiments.\\

\begin{figure}[tb]
\begin{center}
\includegraphics[width=1\textwidth]{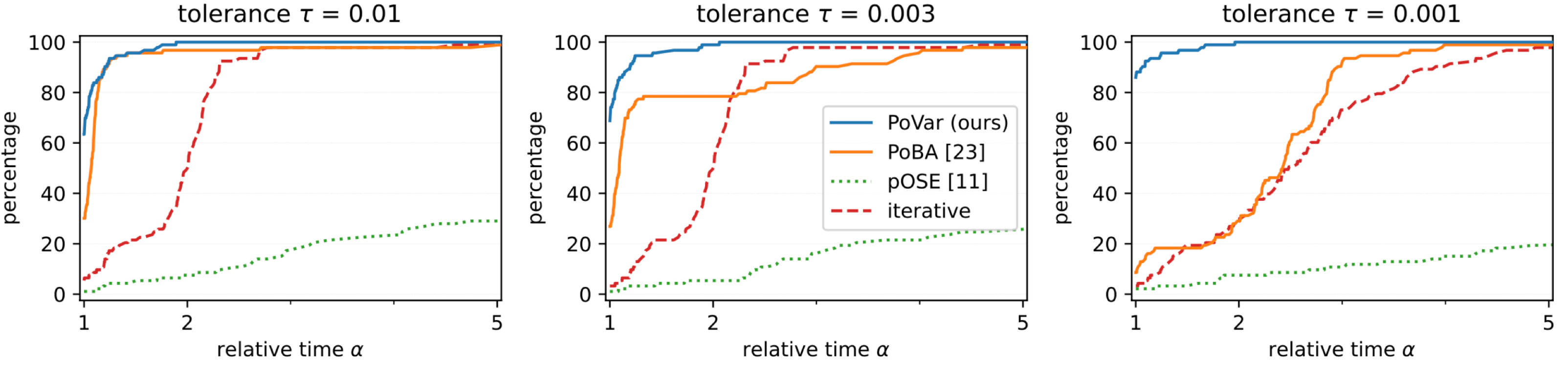}
\caption{With $\eta = 0.2$, performance profiles for all real-world BAL problems for solving the first stage (6). Given a tolerance $\tau \in \{0.01, 0.003, 0.001 \}$, it represents the percentage of solved problems ($y$-axis) with relative runtime $\alpha$ ($x$-axis). Our solver \textit{PoVar} is very competitive, and most notably for the highest accuracy $\tau = 0.001$ and $\tau=0.003$.}
\label{fig:sup_eta20}
\end{center}
\end{figure}

\begin{figure}[tb]
\begin{center}
\includegraphics[width=1\textwidth]{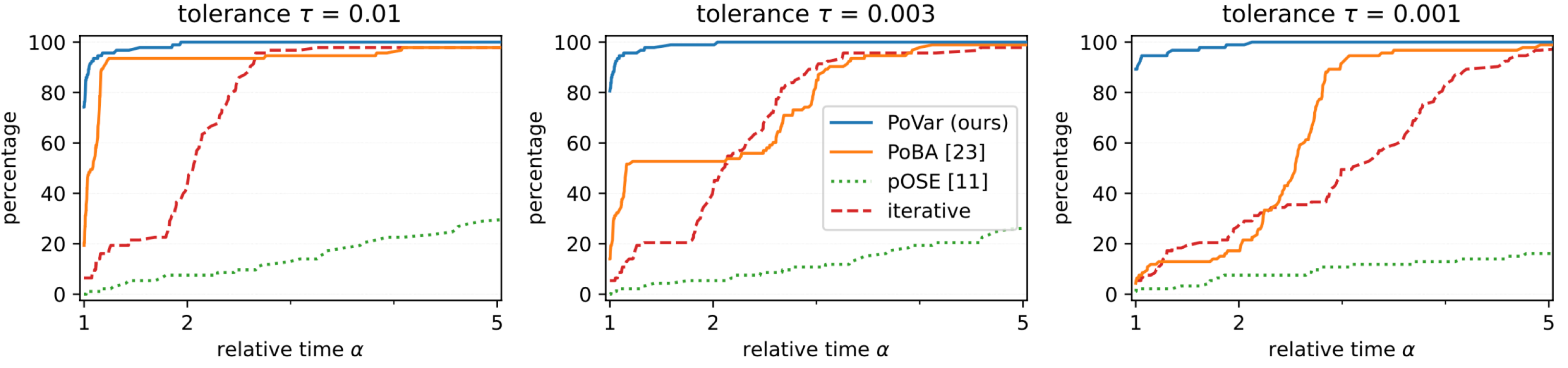}
\caption{With $\eta = 0.3$, performance profiles for all real-world BAL problems for solving the first stage (6).}
\label{fig:sup_eta30}
\end{center}
\end{figure}

\begin{figure}[tb]
\begin{center}
\includegraphics[width=1\textwidth]{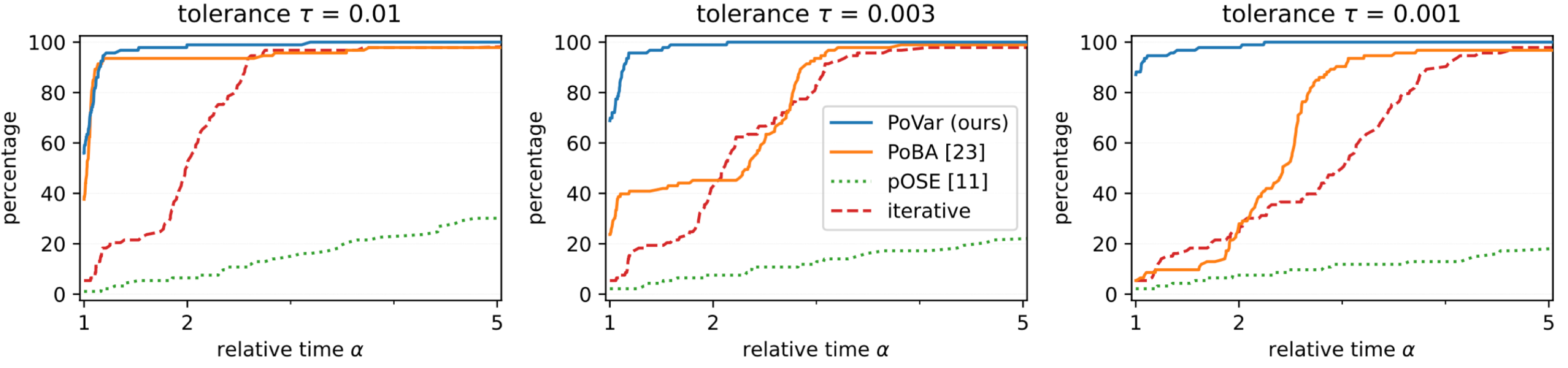}
\caption{With $\eta = 0.4$, performance profiles for all real-world BAL problems for solving the first stage (6).}
\label{fig:sup_eta40}
\end{center}
\end{figure}

\section{Robustness}\label{sec:robust}

We illustrate the robustness of our solver \textit{PoVar} for solving the first stage with respect to the coefficient $\eta$ in the pOSE formulation. \Cref{fig:sup_eta20}, \Cref{fig:sup_eta30} and \Cref{fig:sup_eta40} represent the performance profile for $\eta = 0.2$, $\eta = 0.3$ and $\eta = 0.4$, respectively. We conclude that expansion methods $\textit{PoBA}$ and $\textit{PoVar}$ are both very competitive for the largest tolerance $\tau = 0.01$ for all coefficients $\eta$, in line with our analysis in the main paper with $\eta = 0.1$. In particular, it outperforms \textit{iterative}, and direct factorization (dashed green curves) shows very poor performance due to its lack of scalability. For highest accuracy $\tau = 0.003$ and $\tau = 0.001$, \textit{PoVar} clearly outperforms all its competitors, in line with the main paper. Note that for each $\eta$, we have randomly selected a new set of 97 problems. We can also conclude from our ablation study that our analysis is robust to random initialization.

On the other hand, we link the speed-up of \textit{PoVar} and \textit{RiPoBA}, to the size of the problems. \Cref{fig:small_medium_scale} represents the performance profiles for solving the first two stages by considering only the BAL datasets with less than $1000$ poses (small to medium-scale problems), and \Cref{fig:large_scale} the performance profiles by considering only the BAL datasets with more than $1000$ poses (large-scale problems).

\begin{figure}[tb]
\begin{center}
\includegraphics[width=1\textwidth]{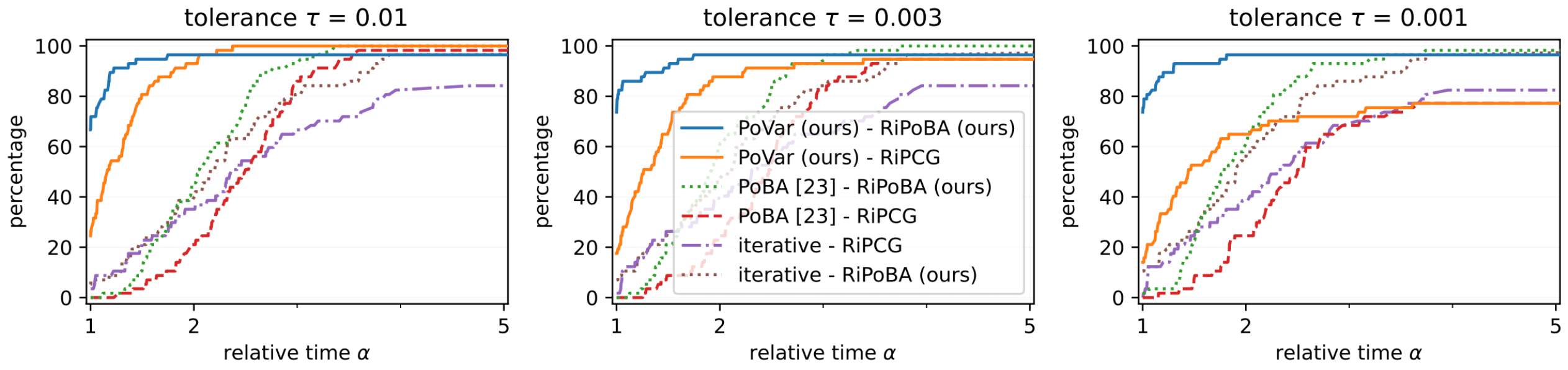}
\caption{Performance profiles for real-world BAL problems with less than $1000$ poses for solving the first two stages Eq. (6) and Eq. (7).}
\label{fig:small_medium_scale}
\end{center}
\end{figure}

\begin{figure}[tb]
\begin{center}
\includegraphics[width=1\textwidth]{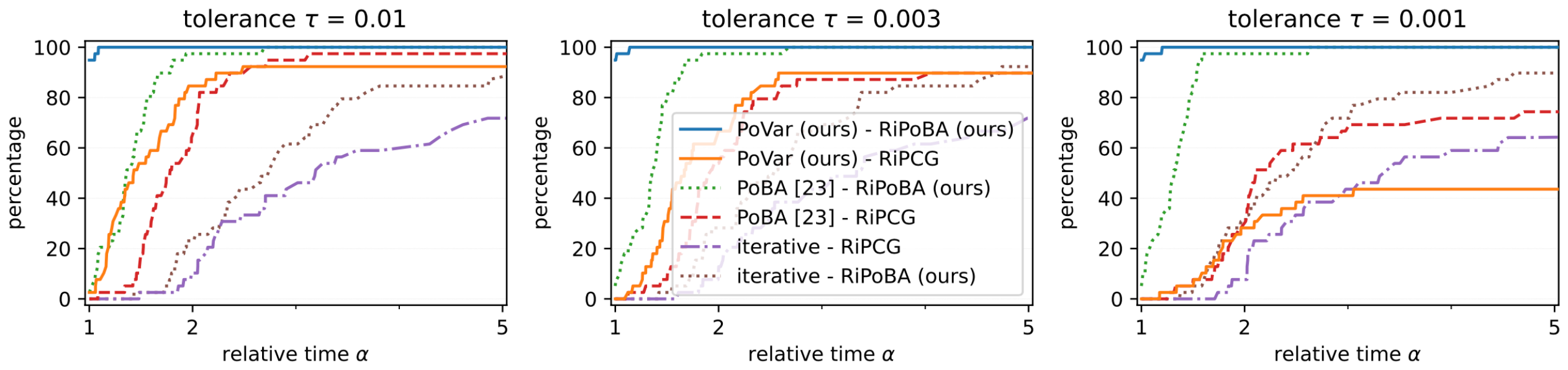}
\caption{Performance profiles for real-world BAL problems with more than $1000$ poses for solving the first two stages Eq. (6) and Eq. (7).}
\label{fig:large_scale}
\end{center}
\end{figure}

\section{Proof of Lemma 1}\label{sec:lemma}
The proof in Weber et al. [23] uses the positive-definiteness of $U_{\lambda}$ and $S$ -- that still holds, to show that $\mu < 1$. It uses the positive-semi-definiteness of $U_{\lambda}^{-\frac{1}{2}}WV_{\lambda}^{-1}W^{\top}U_{\lambda}^{-\frac{1}{2}}$ to conclude that $\mu \geq 0$. For our VarPro formulation, we consider $V_{0}$ instead of $V_{\lambda}$. Note that the generalized Schur complement is written as: $S = U_{\lambda} - W V_{0}^{\dag}W^{\top}$, where $V_{0}^{\dag}$ is the (Moore-Penrose) pseudo-inverse of $V_{0}$. Nevertheless, it is straightforward that $U_{\lambda}^{-\frac{1}{2}}WV_{0}^{\dag}W^{\top}U_{\lambda}^{-\frac{1}{2}}$ is also symmetric positive semi-definite, and then the proof stays almost the same. In details, here is the adapted proof:
\begin{proof}
On the one hand $U_{\lambda}^{-\frac{1}{2}}WV_{0}^{\dag}W^{\top}U_{\lambda}^{-\frac{1}{2}}$ is symmetric positive semi-definite, as $U_{\lambda}$ is symmetric positive definite, and $V_{0}$ is symmetric positive semi-definite. Then its eigenvalues are greater than $0$. As $U_{\lambda}^{-\frac{1}{2}}WV_{0}^{\dag}W^{\top}U_{\lambda}^{-\frac{1}{2}}$ and $U_{\lambda}^{-1}WV_{0}^{\dag}W^{\top}$ are similar, 
\begin{equation}
    \mu \geq 0 \, .
\end{equation} On the other hand $U_{\lambda}^{-\frac{1}{2}}SU_{\lambda}^{-\frac{1}{2}}$ is symmetric positive definite as $S$ and $U_{\lambda}$ are. It follows that the eigenvalues of $U_{\lambda}^{-1}S$ are all strictly positive due to its similarity with $U_{\lambda}^{-\frac{1}{2}}SU_{\lambda}^{-\frac{1}{2}}$. As 
\begin{equation}
    U_{\lambda}^{-1}WV_{0}^{\dag}W^{\top} = I - U_{\lambda}^{-1}S \, ,
\end{equation}
it follows that 
\begin{equation}
    \mu < 1 \, ,
\end{equation} 
that concludes the proof. 
\end{proof}
Concerning Riemannian manifold optimization framework, as the projection $x^{\perp}$ is full rank, it follows that $\tilde{U}_{\tilde{\lambda}}$ and $\tilde{V}_{\tilde{\lambda}}$ are symmetric positive-definite. Then, the previous proof can be very easily adapted to prove Lemma 2.

\section{pOSE Formulation}\label{sec:pose_formulation}
We extensively use the pOSE formulation [11] for testing our solvers. Recently, the follow-up expOSE formulation [14] has been proposed to override some limitations of pOSE. However, such formulation raises some issues for the scalability analysis. In addition to the fact that the authors wrongly claim that they use VarPro, expOSE requires an experimental preprocessing step over each image measurements. Without this first step, the exponential function is equal to $0$ and the algorithm does not update. Nevertheless, such preprocessing is not feasible, in terms of runtime, when the considered dataset is large enough -- which is the topic of our paper, where the number of observations goes up to several tens of millions. Although interesting, expOSE is so far limited to small-scale problems, in line with the problems used by the authors -- between $19$ and $30$ poses in their core paper. Extending this pseudo object space error to large-scale formulation is an interesting research direction, orthogonal to our work.

That being said, note that our proposed solver $\textit{PoVar}$ can be used for solving generic nonlinear problems, and is not restricted to the pOSE formulation. In particular, a recent formulation RpOSE \citelatex{iglesias2021radial} extends pOSE to take into account unknown intrinsics and can be easily adapted to \textit{PoVar}.

\section{Metric Upgrade}\label{sec:metric}
The third stage of pOSE [11] is the \textit{autocalibration} step (see e.g. \citelatex{pollefeys1999self}), aiming to find an ambiguity matrix $H \in \mathbb{R}^{4 \times 4}$ that forces the camera matrices to satisfy the $SE(3)$ constraints, that is to find $H$ such that, for all poses $i$:
\begin{equation}
    x_{p}^{i}H = x_{p}^{i} \begin{pmatrix} A && 0 \\ c^{\top} && 1 \end{pmatrix} \approx K_{i} [R_{i} t_{i}] \, ,
\end{equation}
where $ \begin{pmatrix} c^{\top} && 1 \end{pmatrix}$ represents the plane at infinity.
By denoting $\tilde{H}$ the three left-most columns of $H$, the $SE(3)$ constraint leads to 
\begin{equation}
(K_{i}^{-1}x_{p}^{i}) \tilde{H} \tilde{H}^{\top} (K_{i}^{-1}x_{p}^{i})^{\top} \approx I.
\end{equation}
We find $c$ and the camera scales $\alpha_{i}$ by solving:
\begin{equation}
    \min_{c, \{\alpha_{i}\}} \sum_{i=1}^{n_{p}} \lVert \alpha_{i} (K_{i}^{-1}x_{p}^{i})\tilde{H}(c) \tilde{H}(c)^{\top} (K_{i}^{-1}x_{p}^{i})^{\top} - I \rVert_{F}^{2} \, ,
\end{equation}
with the VarPro algorithm.

In particular, we use the chain rule and the following theorem \citelatex{wang2013derivatives}:
\begin{theorem}
The derivative of $\tilde{H} \tilde{H}^{\top}$ with respect to $\tilde{H}$ is equal to:
\begin{equation}
    \frac{d \tilde{H} \tilde{H}^{\top}}{d \tilde{H}} = (I \otimes \tilde{H}^{\top}) + (\tilde{H}^{\top} \otimes I) T \, ,
\end{equation}
where $T$ is the matrix that transforms $vec(\tilde{H})$ in $vec(\tilde{H}^{\top})$:
\begin{equation}
T vec(\tilde{H}) = vec(\tilde{H}^{\top}) \, ,
\end{equation}
and $vec(H)$ is the operator that creates vector by stringing together the columns of $H$.
\end{theorem}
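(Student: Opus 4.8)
The plan is to prove the identity by matrix differential calculus, reading $\dfrac{d\,\tilde{H}\tilde{H}^{\top}}{d\,\tilde{H}}$ as the Jacobian $\dfrac{\partial\,\mathrm{vec}(\tilde{H}\tilde{H}^{\top})}{\partial\,\mathrm{vec}(\tilde{H})^{\top}}$ under a fixed vectorization convention, and reducing everything to two standard tools: the Kronecker identity $\mathrm{vec}(AXB) = (B^{\top}\otimes A)\,\mathrm{vec}(X)$, and the commutation matrix $T$ already introduced in the statement through $T\,\mathrm{vec}(A) = \mathrm{vec}(A^{\top})$. First I would take the differential of the product. Since $\tilde{H}\mapsto\tilde{H}\tilde{H}^{\top}$ is quadratic, the Leibniz rule gives
\begin{equation}
d(\tilde{H}\tilde{H}^{\top}) = (d\tilde{H})\,\tilde{H}^{\top} + \tilde{H}\,(d\tilde{H})^{\top}\,,
\end{equation}
so the entire content of the theorem is to vectorize this first-order expression and identify the linear map carrying $\mathrm{vec}(d\tilde{H})$ to $\mathrm{vec}\big(d(\tilde{H}\tilde{H}^{\top})\big)$.

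Next I would vectorize the two summands separately. For the first term I would write $(d\tilde{H})\tilde{H}^{\top} = I\,(d\tilde{H})\,\tilde{H}^{\top}$ and apply the Kronecker identity directly; this produces a pure Kronecker factor acting on $\mathrm{vec}(d\tilde{H})$ with no transpose involved, and it accounts for one of the two Kronecker terms in the claimed formula. The second term $\tilde{H}\,(d\tilde{H})^{\top}$ is the delicate one: after the same identity is applied, one is left with $\mathrm{vec}\big((d\tilde{H})^{\top}\big)$ rather than $\mathrm{vec}(d\tilde{H})$, and this is precisely where the commutation matrix must be inserted via $\mathrm{vec}\big((d\tilde{H})^{\top}\big) = T\,\mathrm{vec}(d\tilde{H})$. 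Collecting both contributions and factoring out $\mathrm{vec}(d\tilde{H})$ then yields the sum of a plain Kronecker term and a $T$-twisted Kronecker term, matching the structure $(I\otimes\tilde{H}^{\top}) + (\tilde{H}^{\top}\otimes I)\,T$.

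The main obstacle I anticipate is bookkeeping rather than anything deep: one must commit to a single convention for $\mathrm{vec}$ (column- versus row-stacking) and for the layout of the matrix-by-matrix derivative, because the exact placement of $T$ and the appearance of $\tilde{H}$ versus $\tilde{H}^{\top}$ in the Kronecker factors both depend on that choice. I would therefore fix the convention at the very start, check it against the dimensions so that the two summands are conformable and $T$ carries the correct commutation shape, and validate the Kronecker orderings and the side on which $T$ acts on a small explicit example. Once the convention is pinned down, the identity follows immediately from the two tools above, with no estimates or limiting arguments required.
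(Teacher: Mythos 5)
The paper never proves this theorem: it is quoted as an external result from the cited reference of Wang, Yang and Sun on derivatives of Kronecker products, so there is no in-paper argument to compare against. Your overall strategy --- take the differential $d(\tilde{H}\tilde{H}^{\top}) = (d\tilde{H})\tilde{H}^{\top} + \tilde{H}(d\tilde{H})^{\top}$, vectorize with $\mathrm{vec}(AXB) = (B^{\top}\otimes A)\,\mathrm{vec}(X)$, and absorb the transposed differential with the commutation matrix $T$ --- is indeed the standard and correct way to establish identities of this type, and each individual tool you invoke is sound.

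However, there is a genuine gap at the final step, where you assert that the two vectorized summands "match the structure" $(I\otimes\tilde{H}^{\top}) + (\tilde{H}^{\top}\otimes I)\,T$. Carrying out your own steps under the convention that the statement itself fixes (column-stacking $\mathrm{vec}$, and $T$ defined by $T\,\mathrm{vec}(\tilde{H}) = \mathrm{vec}(\tilde{H}^{\top})$), the first summand gives $\mathrm{vec}\big(I\,(d\tilde{H})\,\tilde{H}^{\top}\big) = (\tilde{H}\otimes I)\,\mathrm{vec}(d\tilde{H})$ (here $B = \tilde{H}^{\top}$, so $B^{\top} = \tilde{H}$, untransposed), and the second gives $\mathrm{vec}\big(\tilde{H}\,(d\tilde{H})^{\top} I\big) = (I\otimes\tilde{H})\,T\,\mathrm{vec}(d\tilde{H})$; hence the Jacobian of $\tilde{H}\tilde{H}^{\top}$ is $(\tilde{H}\otimes I) + (I\otimes\tilde{H})\,T$, with $\tilde{H}$ rather than $\tilde{H}^{\top}$ in both Kronecker factors and with $I$ and $\tilde{H}$ in the opposite slots from the claim. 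This mismatch is not a layout convention that bookkeeping can absorb: with $\tilde{H}\in\mathbb{R}^{4\times 3}$, any Jacobian of the $4\times 4$ matrix $\tilde{H}\tilde{H}^{\top}$ with respect to the $12$ entries of $\tilde{H}$ must be $16\times 12$ (or its transpose), whereas the claimed expression is only dimensionally conformable when $I = I_{3}$, making it $9\times 12$. In fact, the claimed right-hand side is exactly the Jacobian of the other Gram product, since $\mathrm{vec}\big(d(\tilde{H}^{\top}\tilde{H})\big) = \big[(I\otimes\tilde{H}^{\top}) + (\tilde{H}^{\top}\otimes I)\,T\big]\mathrm{vec}(d\tilde{H})$. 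So your computation, executed honestly, does not prove the statement as written; it produces the corrected formula for $\tilde{H}\tilde{H}^{\top}$ and reveals that the stated formula pertains to $\tilde{H}^{\top}\tilde{H}$. To close the gap you must either prove the identity for $\tilde{H}^{\top}\tilde{H}$, or prove $(\tilde{H}\otimes I) + (I\otimes\tilde{H})\,T$ for $\tilde{H}\tilde{H}^{\top}$ and flag the discrepancy in the statement, rather than asserting the terms coincide.
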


\section{Dataset}\label{sec:dataset}{

\makeatletter
\newcommand*\ExpandableInput[1]{\@@input#1 }
\makeatother
\setlength{\tabcolsep}{1em}
\setlength{\LTcapwidth}{0.99\textwidth}
\begin{longtable}{l r r r}%
\label{tab:problem-size}
\endfirsthead
\endhead
\toprule
\ExpandableInput{problem_size.tex}
\bottomrule
\caption{List of all 97 BAL problems [3] including number of cameras, landmarks and observations.}
\end{longtable}
}
\bibliographystylelatex{splncs04}

\end{document}